%% file: main.tex
\documentclass{article}

\PassOptionsToPackage{numbers,square,sort&compress}{natbib}
\usepackage[preprint]{neurips_2026_vericode}
\usepackage[utf8]{inputenc}
\usepackage[T1]{fontenc}
\usepackage{amsmath,amssymb,amsthm,mathtools}
\usepackage{graphicx}
\usepackage{wrapfig}
\usepackage{booktabs}
\usepackage{microtype}
\usepackage{xcolor}
\usepackage{tikz}
\usepackage{pgfplots}
\pgfplotsset{compat=1.18}
\usepgfplotslibrary{groupplots}
\usetikzlibrary{arrows.meta}
\usepackage{hyperref}
\usepackage{url}
\usepackage{cleveref}

\input{macros}

\input{defns.tex}

\title{Constraint-Aware Training}
\author{%
  Jinwoo Kim\thanks{Work partly done while an intern at Microsoft Research.}\\
  Department of Computer Science and Engineering\\
  University of California-San Diego\\
  La Jolla, CA \\
  \texttt{jik083@ucsd.edu}
}

\begin{document}
\maketitle

\begin{abstract}
\input{sections/abstract}

\end{abstract}

\input{sections/introduction}
\input{sections/setting}
\input{sections/width}
\input{sections/depth}
\input{sections/data}
\input{sections/discussion}

\clearpage
\bibliographystyle{plainnat}
\bibliography{references}

\clearpage
\appendix
\input{sections/appendix}

\end{document}

%% file: macros.tex
\newcommand{\CE}{\mathrm{CE}}

\newcommand{\KL}{\operatorname{KL}}
\newcommand{\E}{\mathbb{E}}

\newtheorem{theorem}{Theorem}

\theoremstyle{definition}
\newtheorem{definition}[theorem]{Definition}
\theoremstyle{remark}

%% file: defns.tex
\makeatletter
\newtheorem*{rep@theorem}{\rep@title}
\newcommand{\newreptheorem}[2]{%
\newenvironment{rep#1}[1]{%
 \def\rep@title{#2 \ref{##1}}%
 \begin{rep@theorem}}%
 {\end{rep@theorem}}}
\makeatother

\newcommand{\model}{L_{\theta}}
\newcommand{\celoss}{\ell_{\mathrm{CE}}}
\newcommand{\celossmask}{\ell_{\mathrm{CE}}^{{S_A}}}
\newcommand{\supportloss}{\ell_{\mathrm{A}}}
\newcommand{\innerloss}{\ell_{\mathrm{CA}}}
\newcommand{\onehot}{\mathbf{1}}
\newcommand{\corpus}{\mathcal{C}}
\newcommand{\innerlossavg}{\hat{\ell}_{\mathrm{CA}}}

\newcommand{\fluctuationloss}{\ell_{\mathrm{var}}}
\newcommand{\meanpredictionloss}{\ell_{\mathrm{mean}}}
\DeclareMathOperator{\inputsize}{size}
\newcommand{\maxinputsize}{s_{\max}}

\newreptheorem{theorem}{Theorem}
\newreptheorem{lemma}{Lemma}

\newcommand{\rone}{(\emph{i})~}
\newcommand{\rtwo}{(\emph{ii})~}
\newcommand{\rthree}{(\emph{iii})~}

\newcommand{\lbbar}{\{\kern-0.5ex|}
\newcommand{\rbbar}{|\kern-0.5ex\}}

\newcommand{\biglbbar}{\left\{\kern-0.5ex\left|}
\newcommand{\bigrbbar}{\right|\kern-0.5ex\right\}}

\usepackage{color}

\newcommand{\Etrue}{{{\mathsf{true}}}}

\newcommand{\calS}{\mathcal{S}}

\newcommand{\Omit}[1]{}

\usepackage[breakable]{tcolorbox}
\newenvironment{mybox}[1][gray!20]{
	\begin{tcolorbox}[   
		breakable,
		left=0pt,
		right=0pt,
		top=0pt,
		bottom=-1pt,
		colback=#1,
		colframe=#1,
		width=\dimexpr\textwidth\relax,
		boxsep=2pt,
		arc=0pt,outer arc=0pt,
		]
	}{
\end{tcolorbox}
}

\newcounter{resq}

\newcommand{\set}[1]{\{{#1}\}}

\definecolor{dgreen}{RGB}{0,128,0}
\definecolor{dred}{RGB}{200,0,0}

%% file: sections/abstract.tex
When generating programs with language models, constrained decoding can apply program analyses to exclude tokens that violate syntax, scope, or typing rules. However, there is a duplication: standard training already teaches the model to suppress the tokens rejected by these analyses. This duplication leads to the question: if we will perform some analysis to filter a set tokens out during inference anyways, can we avoid teaching the model the said analysis altogether during training, and does this externalization lead to more efficient models? This paper defines a general constraint-aware objective satisfying this externalization desideratum and formalizes the benefits of externalization into three concrete theorems about model size and data efficiency. We show, through a controlled synthetic experiment, that the theorems survive training dynamics: constraint-aware training yields lower prediction loss at a matched parameter count and data compared to ordinary cross-entropy training, motivating training objectives that incorporate the analyses used during generation.

%% file: sections/introduction.tex
\section{Introduction}
\label{sec:introduction}

Large language models
(LLMs) are being increasingly used to generate outputs with a formal
syntax and semantics: tool calls, programs, and proofs.
In many cases, such outputs have a notion of \emph{validity}: for example,
only a small fraction of all strings are valid programs.
For formal artifacts, this validity is almost always defined by a
computable analysis (e.g., language membership over a grammar
or typing rules).

Of particular interest for autoregressive (AR) LLMs
is the case where these analyses can become prefix-grounded,
allowing us to soundly filter out next-token selections
depending on whether they would result in a valid output or not.
For example, when completing an expression coming after the prefix
\texttt{"int x = 10; int r = "},
a scope analysis, among candidate variable names, may allow only the in-scope \texttt{x}
while filtering out other undefined names (such as \texttt{y} or \texttt{z}).

This technique, commonly known as \emph{constrained} (or \emph{guided})
\emph{decoding}~\citep{chopchop,gcd,llguidance,vechevtypes}
is well-studied and developed, but there exists
a fundamental mismatch, or inefficiency in its deployment scenario.
In our previous example, a model trained on standard next-token
cross-entropy (CE) will already
place near-zero probability mass on
the tokens corresponding to the variables \texttt{y} or \texttt{z}.
In other words, the model has \emph{already learned}
to separate the valid \texttt{x} from the invalid \texttt{y} and \texttt{z},
even though constrained decoding will provide this information at runtime.
This raises the question: if we will perform a prefix analysis to
filter a certain set of
tokens out during inference anyways, can we \emph{externalize} this analysis
and avoid teaching the model a problem we already know how to solve
symbolically?
More practically, can a model where a prefix analysis is
somehow externalized be more efficient than a model without externalization?

Consider the example again, for a 3-token tokenizer
$\{\texttt{x}, \texttt{y}, \texttt{z}\}$.
Standard CE loss requires the model to predict
the distribution $(\texttt{x}, \texttt{y}, \texttt{z}) = (1, 0, 0)$.
However, if we assume that the model will always be deployed
with the scope analysis at inference,
the space of solutions becomes much freer.
$(1, 0, 0)$, $(0.6, 0.2, 0.2)$, and $(0.1, 0.4, 0.5)$ all become valid predictions, as filtering out the probabilities of
\texttt{y} and \texttt{z} then renormalizing yields the target distribution
$(1, 0, 0)$ for all these distributions;
in general, any distribution $(p_{\texttt{x}},p_{\texttt{y}},p_{\texttt{z}})$
becomes valid as long as $p_{\texttt{x}} \neq 0$.
In other words, the space of solutions where the scope analysis is externalized
is strictly larger than the space of solutions defined by the standard CE loss,
and it is plausible that a model with limited capacity can more easily fit
the larger externalized space than the single CE solution.

Our main observation is that prefix-definable analyses can
be externalized by defining a
\emph{constraint-aware} (CA) loss for training:
at each prefix, apply the analysis,
renormalize over the remaining tokens, and compute cross-entropy against
the target token.
\Cref{fig:loss-scenarios} illustrates how the CA loss is computed:
intuitively, \emph{only} tokens that are admitted by the prefix analysis
receive a signal.

\input{figures/loss-scenarios-figure}

Variants of the CA loss we propose have appeared in
specialized
domains~\citep{reclm,graft};
our contribution is generalizing and
formalizing the effect of the CA loss
precisely as the externalization of a prefix analysis
(\Cref{thm:deployed-likelihood}).
We use this connection to prove three theorems about
the benefits of externalizing analyses:
two \emph{capacity} theorems, which show that
choosing not to externalize an analysis imposes certain
\rone width (\Cref{thm:width}) and \rtwo depth (\Cref{thm:depth})
requirements on the size of the transformer,
and a \emph{data efficiency} theorem (\Cref{thm:variance}) showing
how
constraint-aware training can reduce the amount of data
required to reach a target loss.
We then test, preliminarily,
whether these theoretical benefits carry over to actual
transformer training via controlled small-scale training runs over
a synthetic language.

Overall, our theorems and the results of our experiments
argue for
\emph{constraint-aware training}:
across our experiments, models trained over CA loss
and deployed under constrained decoding
exhibit
consistently lower loss compared to `ordinary'
models trained over CE loss, \emph{both}
when the CE model is deployed normally
(where the CE model
loss is computed over the entire set of tokens)
and under constrained decoding
(where the CE model loss is computed only over the
set of tokens allowed by the analysis).
The empirical advantage persists across training:
CA training attains lower loss even when the theory
no longer provides a guarantee that CA will achieve better loss
(e.g., when the model is wide enough),
providing evidence that
the theoretical benefits of constraint-aware training
transfer to actual training.

%% file: figures/loss-scenarios-figure.tex
\begin{figure}[t]
  \centering
  \resizebox{\linewidth}{!}{\input{figures/loss-scenarios}}
  \caption{
  Standard (CE) and constraint-aware (CA) loss
  compared on three different
  scenarios on each row.
  In each row,
  mask $M$ denotes the set of tokens allowed (marked with $\mathsf{T}$)
  according to the analysis, $t$ the one-hot target, $p$ the model prediction,
  and $p_M$ the model prediction renormalized on tokens allowed by $M$.
  (a) With only the gold \texttt{x} allowed, CA loss is zero.
  (b) When the mask $M$ allows both \texttt{x} and \texttt{y},
   CA penalizes assigning probability
  to the non-gold \texttt{y}.
  (c) Changing the mass on the rejected token (vs. (b))
  without changing the
  $\texttt{x}:\texttt{y}$ ratio changes CE but leaves CA unchanged.}
  \label{fig:loss-scenarios}
\end{figure}

%% file: figures/loss-scenarios.tex
\begin{tikzpicture}[
  x=1cm, y=1cm,
  font=\footnotesize,
  >=Latex,
  masked/.style={text=blue!55!black},
  separator/.style={draw=black!20, line width=0.35pt}
]
  \path[use as bounding box] (0,0) rectangle (14.25,3.55);
  \node[font=\small] at (7.125,3.3)
    {Token order $(\texttt{x},\texttt{y},\texttt{z})$};
  \node at (2.1,2.9) {Prefix mask and gold};
  \node at (5.3,2.9) {Model prediction};
  \node at (9,2.9) {After masking};
  \node at (12.2,2.9) {Scalar losses};
  \draw[separator] (0,2.65) -- (14.25,2.65);

  \begin{scope}[shift={(0,2.2)}]
    \node[anchor=west, font=\footnotesize\bfseries] at (0.1,0) {(a)};
    \node at (2.1,0.18) {$M=(\mathsf{T},\mathsf{F},\mathsf{F})$};
    \node at (2.1,-0.18) {$t=(1,0,0)$};
    \node at (5.3,0) {$p=(0.6,0.2,0.2)$};
    \draw[->, line width=0.55pt] (6.85,0) -- (7.55,0);
    \node[masked] at (9,0) {$p_M=(1,0,0)$};
    \node at (12.2,0.18) {CE: $-\log 0.6 \approx 0.511$};
    \node[masked] at (12.2,-0.18) {CA: $-\log 1 = 0$};
  \end{scope}

  \begin{scope}[shift={(0,1.3)}]
    \node[anchor=west, font=\footnotesize\bfseries] at (0.1,0) {(b)};
    \node at (2.1,0.18) {$M=(\mathsf{T},\mathsf{T},\mathsf{F})$};
    \node at (2.1,-0.18) {$t=(1,0,0)$};
    \node at (5.3,0) {$p=(0.3,0.3,0.4)$};
    \draw[->, line width=0.55pt] (6.85,0) -- (7.55,0);
    \node[masked] at (9,0) {$p_M=(0.5,0.5,0)$};
    \node at (12.2,0.18) {CE: $-\log 0.3 \approx 1.204$};
    \node[masked] at (12.2,-0.18) {CA: $-\log 0.5 \approx 0.693$};
  \end{scope}

  \begin{scope}[shift={(0,0.4)}]
    \node[anchor=west, font=\footnotesize\bfseries] at (0.1,0) {(c)};
    \node at (2.1,0.18) {$M=(\mathsf{T},\mathsf{T},\mathsf{F})$};
    \node at (2.1,-0.18) {$t=(1,0,0)$};
    \node at (5.3,0) {$p=(0.45,0.45,0.1)$};
    \draw[->, line width=0.55pt] (6.85,0) -- (7.55,0);
    \node[masked] at (9,0) {$p_M=(0.5,0.5,0)$};
    \node at (12.2,0.18) {CE: $-\log 0.45 \approx 0.799$};
    \node[masked] at (12.2,-0.18) {CA: $-\log 0.5 \approx 0.693$};
  \end{scope}

  \foreach \separator in {1.75,0.85} {
    \draw[separator] (0,\separator) -- (14.25,\separator);
  }
\end{tikzpicture}

%% file: sections/setting.tex
\section{Externalizing Analyses Using Constraint-Aware Loss}
\label{sec:setting}

Given a prefix (a context) $c$, the goal of an LLM $\model$
is to produce
the distribution $p(\cdot \mid c)$ over tokens $x$ in a tokenizer $V$.
If an analysis $A$ marks some subset $O$ of $V$ as invalid,
one can intuitively understand training $\model$ as trying to satisfy
two separate goals:
\rone sending $p(x \mid c) \rightarrow 0$ for $x \in O$, and
\rtwo modeling $p(x \mid c)$ according to the corpus for $x \not \in O$.
The goal of this section is to establish this decomposition into a precise
mathematical form, then show that a model $\model$ can be trained
on the second objective alone.

\subsection{Constraint-Aware Loss Decomposition}
We start with the definition of a prefix analysis.
\begin{definition}[Prefix Analysis]
    \label{def:prefix-analysis}
    A \emph{prefix analysis} over a finite vocabulary $V$ is a computable
    function $A:V^*\times V\longrightarrow\ \mathsf{Bool}$, where a token
    $t \in V$ is \emph{allowed} at a prefix $c$ iff $A(c, t) = \Etrue$.
    For a prefix $c$,
    we define the \emph{allowed set} $S_A$ as
    $S_A(c)=\{x\in V:A(c,x)=\Etrue\}$,
    the \emph{disallowed set} $O_A$ as
    $O_A(c)=V\setminus S_A(c)$, and the
    \emph{prefix mask}
    $M_A(c,x)=\mathbf 1\{A(c,x)=\Etrue\}$.
\end{definition}
We drop the $A$ and $c$ quantifiers and simply denote the allowed set,
disallowed set, and mask as $S$, $O$, and $M$ when $A$ and $c$ are
clear from context.
In the example from the introduction, the allowed set $S=\set{\texttt{x}}$
while the disallowed set $O=\set{\texttt{y},\texttt{z}}$.
In this paper, we assume analyses $A$ are
\emph{sound} with respect to the training data: for
all prefix-next token pairs $(c, y)$ in the training data, $A(c, y)=\Etrue$.

Given a prefix $c$, let
$p(S\mid c)=\sum_{x\in S}p(x\mid c)$
denote the total probability mass on allowed tokens.
For any fixed next-token $y \in S$, we observe the probability
predicted by the model
$p(y \mid c)$
can be decomposed as
$p(y \mid c) = p(S \mid c) \cdot \frac{p(y \mid c) }{p(S \mid c) }$.
Taking negative log yields a decomposition of the loss:
{\small
\begin{equation}
  \label{eq:loss-decomposition}
 \underbrace{-\log p(y\mid c)}_{\celoss}
 =
 \underbrace{-\log p(S\mid c)}_{\supportloss} +
 \underbrace{-\log\frac{p(y\mid c)}{p(S\mid c)}}_{\innerloss}
\end{equation}
}
In \Cref{eq:loss-decomposition}, $\celoss$ denotes the
standard cross-entropy loss induced by $p(y \mid c)$, which is decomposed
as the sum of two elements:
\rone the \emph{support loss} $\supportloss$ and
\rtwo the \emph{constraint-aware loss} $\innerloss$.
We observe that $\supportloss$ is \emph{precisely} the penalty
induced by assigning non-zero probability to tokens that are disallowed by
the analysis,
while $\innerloss$ is \emph{precisely} the penalty induced by
mispredicting the renormalized distribution $p(\cdot \mid c)$ on the
allowed support $S$.
For example, reconsider row (b) of \Cref{fig:loss-scenarios}.
The losses are
\[
 \celoss =-\log0.3\approx1.204,\qquad
 \innerloss =-\log0.5\approx0.693,\qquad
 \supportloss =-\log0.6\approx0.511.
\]
and $\celoss = \innerloss + \supportloss$.
%
This decomposition carries over to
\emph{gradients}
in the per-token logit space:
$\supportloss$ defines a gradient that pushes mass off of
disallowed tokens, while $\innerloss$ defines a gradient that rebalances
mass amongst allowed tokens, which becomes the main
motivation behind
constraint-aware training.

To see the gradient decomposition, let
$z(x \mid c)$ denote the logit-level prediction for a token $x$ on the
prefix $c$ (i.e., $p(\cdot \mid c) = \operatorname{softmax}(z(\cdot \mid c))$).
Let $p_S$ denote $p$ renormalized on the allowed set $S$, with probability
zero outside $S$.
Let $\onehot_x$ denote the one-hot distribution on $x$.
For a fixed training pair $($prefix $c$, gold token $y)$, differentiating
\Cref{eq:loss-decomposition} over the logit space
for a fixed analysis gives:
{\small
\begin{equation}
  \label{eq:gradient-decomposition}
 \frac{\partial\celoss}{\partial z(x\mid c)}
 =\frac{\partial\supportloss}{\partial z(x\mid c)}
 +\frac{\partial\innerloss}{\partial z(x\mid c)}
 =\underbrace{p(x\mid c)-p_S(x\mid c)}_{\partial\supportloss/\partial z(x\mid c)}
 +\underbrace{p_S(x\mid c)-\onehot_y(x)}_{\partial\innerloss/\partial z(x\mid c)}
 =p(x\mid c)-\onehot_y(x).
\end{equation}
}
(A precise derivation of \Cref{eq:gradient-decomposition} can be found in
\Cref{app:derivations}).
If a token $x$ is \emph{disallowed} by the analysis $A$,
$p_S(x\mid c) = \onehot_y(x) = 0$,
so the entire gradient
comes from the support component
$p(x\mid c) - p_S(x\mid c) > 0$ towards
decreasing the probability of the token $x$.
Conversely, if $x$ is allowed,
the support component $p(x\mid c)-p_S(x\mid c) \leq 0$, so the support
component increases the probability of $x$ while the constraint-aware component rebalances probability amongst allowed tokens.

For example, in row (b) of \Cref{fig:loss-scenarios},
$p(\cdot\mid c)=(0.3,0.3,0.4)$, $p_S(\cdot\mid c)=(0.5,0.5,0)$,
and $\onehot_{\texttt{x}}=(1,0,0)$.
Collecting the entries for \texttt{x}, \texttt{y}, and \texttt{z} gives
\[
 \underbrace{(-0.7,\;0.3,\;0.4)}_{\nabla_z\celoss}
 =\underbrace{(-0.2,\;-0.2,\;0.4)}_{\nabla_z\supportloss}
 +\underbrace{(-0.5,\;0.5,\;0)}_{\nabla_z\innerloss}.
\]
We use $\nabla_z \ell$ as shorthand for
$\bigl(\partial\ell/\partial z(x\mid c)\bigr)_{x\in V}$.
For the disallowed token $\texttt{z}$, only $\nabla_z \supportloss$
gives signal decreasing its probability.
For allowed tokens $\texttt{x}$ and $\texttt{y}$,
$\nabla_z \supportloss$ moves the probability took from \texttt{z}
to \texttt{x} and \texttt{y}
while $\nabla_z \innerloss$ redistributes mass from
\texttt{y} to \texttt{x}.

\subsection{Constraint-Aware Training}

Because an analysis $A$ is independent of the parameters
$\theta$, the chain rule preserves the split when backpropagating through
the model:
\begin{equation}
 \nabla_\theta\celoss
 =\nabla_\theta\supportloss + \nabla_\theta\innerloss
 \label{eq:parameter-gradient-decomposition}
\end{equation}
Constraint-aware training drops the
support gradient $\nabla_\theta \supportloss$ completely
and trains only on $\nabla_\theta \innerloss$.
For a fixed corpus $\corpus$, training exclusively on
$\nabla_\theta \innerloss$ has the effect of
fitting the model's probabilities \emph{after masking and renormalization}
to how often each token follows each prefix in $\corpus$.

Precisely speaking, let
$\corpus=((c_i,y_i))_{i=1}^N$ denote a multiset of
prefix--next-token training pairs.
Let $n_\corpus(c)$ denote the number
of pairs in $\corpus$ with prefix $c$,
and $\widehat p_\corpus(x\mid c)$ the fraction of
these pairs whose next token is $x$.
Constraint-aware training upsates the parameters $\theta$
mimimize the average CA loss
$\innerlossavg(\theta; \corpus)$
over the training corpus $\corpus$:
{\small
\begin{equation}
 \innerlossavg(\theta;\corpus)
 =\frac1N\sum_{i=1}^N\innerloss(\theta;c_i,y_i).
 \label{eq:ca-empirical-loss}
\end{equation}
}
\Cref{thm:deployed-likelihood} states the correctness of this
training objective.

\begin{theorem}[Correctness of Constraint-Aware Training]
\label{thm:deployed-likelihood}
Let $\corpus$ be a training-pair multiset and $A$ an analysis.
Let
$H_\corpus$ denote the average next-token entropy of $\corpus$:
$H_\corpus =\frac1N\sum_{c:n_\corpus(c)>0}n_\corpus(c)H
(\widehat p_\corpus (\cdot\mid c))$
and $p_{S_A(c), \theta}(\cdot | c)$ the next-token distribution
$p_\theta (\cdot | c)$ renormalized on the allowed set $S_A(c)$.
Then
{\small
\begin{equation}
 \innerlossavg(\theta;\corpus)-H_\corpus
 =\frac1N\sum_{c:n_\corpus(c)>0}n_\corpus(c)\,
 \KL\!\left(\widehat p_\corpus(\cdot\mid c)\,\middle\|\,
 p_{S_A(c),\theta}(\cdot\mid c)\right).
 \label{eq:ca-empirical-correctness}
\end{equation}
}
Because the entropy term $H_\corpus$ is fixed,
minimizing $\innerlossavg(\theta;\corpus)$ also minimizes the right-hand side
term, the weighted average Kullback--Leibler (KL) divergence between
\rone $\widehat p_\corpus(\cdot\mid c)$,
the next-token distribution defined by $\corpus$, and
\rtwo $p_{S_A(c), \theta}(\cdot | c)$,
the distribution predicted by $\theta$ when deployed with the analysis $A$.
\end{theorem}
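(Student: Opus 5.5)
The plan is to regroup the empirical average in \Cref{eq:ca-empirical-loss} by prefix and then apply, at each prefix, the standard identity ``cross-entropy $=$ entropy $+$ KL''. First, rewrite $\innerloss(\theta;c,y)=-\log p_{S_A(c),\theta}(y\mid c)$. This follows from \Cref{eq:loss-decomposition} and the definition of renormalization: for $y\in S_A(c)$ we have $p_{S_A(c),\theta}(y\mid c)=p_\theta(y\mid c)/p_\theta(S_A(c)\mid c)$. This rewriting is only valid when $y\in S_A(c)$, and that is exactly what the soundness assumption on $A$ guarantees for every pair $(c_i,y_i)\in\corpus$.

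Next, group the sum over $i$ by distinct prefixes $c$ with $n_\corpus(c)>0$, and within each group by next token $x$. By definition, the number of pairs equal to $(c,x)$ is $n_\corpus(c)\,\widehat p_\corpus(x\mid c)$. Hence
\[
\innerlossavg(\theta;\corpus)=\frac1N\sum_{c:n_\corpus(c)>0}n_\corpus(c)\sum_{x}\widehat p_\corpus(x\mid c)\bigl(-\log p_{S_A(c),\theta}(x\mid c)\bigr).
\]
The inner sum runs only over $x$ with $\widehat p_\corpus(x\mid c)>0$. By soundness, every such $x$ lies in $S_A(c)$, so $\widehat p_\corpus(\cdot\mid c)$ is supported inside $S_A(c)$. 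The inner sum is therefore the cross-entropy $H(\widehat p_\corpus(\cdot\mid c),p_{S_A(c),\theta}(\cdot\mid c))$.

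Then apply, prefix by prefix, the identity $H(q,r)=H(q)+\KL(q\,\|\,r)$, which holds whenever $\operatorname{supp}q\subseteq\operatorname{supp}r$. It follows by adding and subtracting $\sum_x q(x)\log q(x)$. Summing these identities with weights $n_\corpus(c)/N$ gives $H_\corpus$ plus the weighted KL term. Moving $H_\corpus$ to the left-hand side yields \Cref{eq:ca-empirical-correctness}. The concluding remark in the statement is then immediate, since $H_\corpus$ does not depend on $\theta$.

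The main obstacle is not the algebra but making sure every quantity is well defined. Two points need checking.
\begin{itemize}
\item $p_{S_A(c),\theta}$ must be a genuine distribution, which requires $p_\theta(S_A(c)\mid c)>0$. This holds automatically for softmax outputs, since all probabilities are strictly positive, as long as $S_A(c)$ is nonempty. Nonemptiness follows from soundness whenever $n_\corpus(c)>0$.
\item Positivity of $p_\theta(x\mid c)$ on $S_A(c)$ makes every logarithm finite. Together with the support inclusion $\operatorname{supp}\widehat p_\corpus(\cdot\mid c)\subseteq S_A(c)$ from soundness, it also makes each KL term finite.
\end{itemize}
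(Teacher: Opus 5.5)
Your proposal is correct and follows essentially the same route as the paper: it regroups the empirical average by prefix, uses soundness to place the support of $\widehat p_\corpus(\cdot\mid c)$ inside $S_A(c)$, and adds and subtracts $\log\widehat p_\corpus(x\mid c)$ to split each per-prefix cross-entropy into entropy plus KL. Your well-definedness checks (strict positivity of softmax outputs on a nonempty $S_A(c)$, and finiteness of each KL term) make explicit what the paper leaves implicit.
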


\Cref{thm:deployed-likelihood} proves that constraint-aware training
minimizes KL divergence with the training set:
the same correctness statement as standard training, \emph{conditioned}
on the renormalized distributions obtained by applying the analysis.
For example, if the training-pair set $\corpus$ contains
a prefix is followed by \texttt{x} four times and
\texttt{y} once, and the analysis allows only those two tokens,
both the distributions
$(\texttt{x}, \texttt{y}, \texttt{z}) = (0.8, 0.2, 0.0)$ and
$(0.4, 0.1, 0.5)$ satisfy the
$\innerlossavg(\theta;\corpus)$ objective
(unlike
standard training, which would force the model towards $(0.8, 0.2, 0.0)$).

\Cref{thm:deployed-likelihood} defines, mathematically, how
constraint-aware training has \emph{externalized} the analysis:
it effectively ignores tokens that would be filtered out by the analysis
when minimizing KL divergence against the distribution of the training
set.
During inference, the analysis filters these tokens out anyways;
constraint-aware training thus focuses solely on learning the
relative distribution between \texttt{x} and \texttt{y}.
\Cref{app:ca-correctness,app:ca-strings}
provide a proof of \Cref{thm:deployed-likelihood} and a connection
to sequence likelihood over $\corpus$.

%% file: sections/width.tex
\section{The Benefits of Externalization: Width}
\label{sec:width-theory}

Now that we have established how constraint-aware training
externalizes an analysis, we proceed to show how externalization
brings concrete benefits: starting by identifying an
analysis that enforces a \emph{lower bound} on the \emph{width}
of a transformer, and showing that externalization can alleviate
this bound.

We illustrate the problematic analysis with an example.
Consider a language with five candidate variable names
$\{\texttt{x1},\ldots,\texttt{x5}\}$, of which exactly two are in scope.
Different prefixes can have any of the ten possible pairs in scope:
for example,
at the prefix \texttt{int x1 = 0; int x2 = 0; int r =},
a scope analysis allows only $S=\{\texttt{x1},\texttt{x2}\}$.
Suppose the corpus uses these two names equally often, so the target
distribution over the five candidates
at this point is
$(0.5,0.5,0,0,0)$.

Let us assume that the transformer contains a
\emph{readout} layer
$L_R$ of size $|V| \times w$ that
consumes a hidden state of width $w$
and converts it
into logits over the vocabulary $V$.
Let us assume, in our example, that we are working with a trunk
that produces a hidden state of
width $w=1$, and thus the readout receives a single number $h$ from the trunk.
Under the CA objective, a possible solution for $L_R$
is $L_R = (0 \ 1 \ 2 \ 3 \ 4)^\top$: $h=0$ yields a
base probability distribution of
$(0.2,0.2,0.2,0.2,0.2)$, which becomes the target
$(0.5,0.5,0,0,0)$ after masking and renormalization.
Assuming the trunk is sufficiently capable,
$L_R = (0 \ 1 \ 2 \ 3 \ 4)^\top$ is a \emph{concrete solution}
for the optimization
problem the CA objective tries to solve:
for any allowed pair of tokens and their probability ratio,
there exists an output $h$ from the trunk such that
$\operatorname{softmax}(L_R h)$ renormalized obtains the target ratio.

On the other hand, without the analysis filter,
\emph{there does not exist} a readout $L_R$ that is sufficient to obtain
any desired probability ratio while assigning arbitrarily little
probability to the disallowed tokens:
multiplying the five readout weights in $L_R$ by $h$ can only preserve their
ordering ($h>0$), reverse it ($h<0$), or make all tokens equally likely
($h=0$), but not make each of the ten possible allowed pairs
outrank the other three tokens.
In fact, the width $w$ must be at least $4$ for the CE objective to
model any pair-live distribution in our example
(\Cref{app:width} and \Cref{thm:width}):
internalizing the analysis enforces a
\emph{minimal width} on the transformer, larger than when the
analysis is externalized.
\Cref{thm:width} formalizes this difference in required width.

\begin{theorem}[Minimum Width for Analysis]
\label{thm:width}
Let $A$ be a prefix analysis that allows between one and $m$ tokens
at each prefix, where $m\ge2$. Suppose that, as the prefix varies,
its allowed set can be any $m$-element subset of a pool of at least
$2m+1$ tokens. Then
the minimum hidden-state widths permitting a capable
readout are
\begin{equation}
 w^{\mathrm{cap}}_{\mathrm{CA}}=m-1,\qquad
 w^{\mathrm{cap}}_{\CE}=2m.
 \label{eq:width-gap}
\end{equation}
\end{theorem}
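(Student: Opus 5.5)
We interpret ``capable readout'' in the sense of the worked example preceding \Cref{thm:width}. A readout $L_R$ with rows $v_x\in\mathbb R^w$ is \emph{CA-capable} if, for every allowed set $S$ and every target $q$ with full support on $S$, some hidden state $h$ satisfies $p_S=q$, where $p=\operatorname{softmax}(L_Rh)$. It is \emph{CE-capable} if, for every such $S$, $q$ and every $\varepsilon>0$, some $h$ gives $p$ within $\varepsilon$ of $q$; in particular $p(O)<\varepsilon$. Softmax is invariant to adding a constant to all logits, so only differences $v_x\cdot h - v_y\cdot h$ matter. This lets us read both conditions geometrically on the point configuration $\{v_x\}\subset\mathbb R^w$. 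The proof then splits into four parts: two bounds for CA and two for CE.

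\textbf{CA lower and upper bound.} For $|S|=m$, the map $h\mapsto (v_x\cdot h)_{x\in S}$ modulo the all-ones vector must be onto $\mathbb R^S/\mathbb R\mathbf 1$, which has dimension $m-1$. Hence $w\ge m-1$. Equivalently, the differences $v_x-v_{x_0}$ for $x\in S$ must span an $(m-1)$-dimensional space.

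For the matching construction at $w=m-1$, we place the $v_x$ on the moment curve $t\mapsto(t,t^2,\dots,t^{m-1})$ at distinct parameters. Any $m$ such points are affinely independent by a Vandermonde argument, and so is any smaller subset. Therefore every allowed set of size at most $m$ can realize any log-ratio vector by solving a linear system. This recovers the $w=1$, $m=2$ example.

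\textbf{CE reformulation.} The key step is to show that CE-capability is equivalent to a neighborliness condition. Fix $S$ with $|S|=m$, and write $h=h_0+\lambda d$ with $\lambda\to\infty$. Mass off $S$ can vanish while the ratios on $S$ stay fixed only if there is a direction $d$ with $v_x\cdot d$ constant on $S$ and strictly larger than $v_k\cdot d$ for every $k\notin S$. If no such $d$ exists, a compactness or separation argument bounds $p(O)$ away from $0$ uniformly whenever the $S$-ratios stay near $q$. We will take the approach of applying Farkas/LP duality to the ratios between $O$ and $S$.

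Combining this with the CA condition, a CE-capable readout must satisfy two properties:
\begin{enumerate}
\item every $m$-subset of the pool spans an $(m-1)$-simplex;
\item that simplex is a face of $P=\operatorname{conv}\{v_x\}$, with all remaining points strictly off the supporting hyperplane.
\end{enumerate}
In other words, $P$ is an $m$-neighborly polytope whose vertices are the (distinct) $v_x$, with $n\ge 2m+1$ vertices. We expect this reduction, especially the ``only if'' direction with strictness, to be the main obstacle.

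\textbf{CE bounds.} Let $d=\dim P\le w$. We invoke the classical fact that a $k$-neighborly $d$-polytope with $k>\lfloor d/2\rfloor$ must be a simplex, so $n=d+1$. If $w\le 2m-1$, then $\lfloor d/2\rfloor\le m-1<m$, yet $n\ge 2m+1>d+1$, which is a contradiction. Hence $w\ge 2m$.

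For the upper bound, take the cyclic polytope $C(n,2m)$. It is simplicial and $m$-neighborly by Gale's evenness condition. Therefore every $k$-subset with $k\le m$ is a simplex face, strictly supported, and the pushing direction together with the CA-style solve on the face gives CE-capability at $w=2m$. This yields \Cref{eq:width-gap}.
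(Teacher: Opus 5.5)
Your plan has the same skeleton as the paper's proof. The CA criterion is affine independence of the allowed rows. The additional CE criterion is an exposing functional ($v_x\cdot g=\beta$ on $S$, $v_k\cdot g<\beta$ off $S$). The upper bounds come from moment-curve readouts in $\mathbb R^{m-1}$ and $\mathbb R^{2m}$: your cyclic polytope $C(n,2m)$ is exactly the paper's construction, except that the paper writes the exposing functional explicitly as the coefficients of $-\prod_{v\in S}(t-t_v)^2$ instead of invoking Gale's evenness. The CE lower bound uses a Radon-type obstruction: the paper reproves inline, via a Radon partition of $d+2$ pool rows, the classical fact you cite (a $k$-neighborly $d$-polytope with $k>\lfloor d/2\rfloor$ is a simplex).

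The real difference is the necessity of the separator, which you flag as the main obstacle. The paper normalizes the logits and lifts their convergent allowed coordinates back into $\operatorname{col}(L_R)+\mathbb R\mathbf 1$ through a bounded right inverse. Subtracting gives a vector that vanishes on $S$ and is negative on $O$. Your Farkas route works and is arguably shorter. If no separator exists, Motzkin's transposition theorem gives $\mu\ge0$, $\mu\ne0$ on $O$ and free $\lambda$ on $S$ with $\sum_{k\in O}\mu_k(v_k-v_{x_0})=\sum_{x\in S}\lambda_x(v_x-v_{x_0})$. Pair this identity with states $h_n$ along which $p_{h_n}\to\pi$. The left side tends to $-\infty$, because each rejected log-ratio $\log(p_n(k)/p_n(x_0))$ diverges. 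The right side converges to $\sum_x\lambda_x\log(\pi(x)/\pi(x_0))$, a contradiction.

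Two small repairs are needed. First, the pool $K$ need not be all of $V$; in the experiments it is 61 names in a 134-token vocabulary. So $\operatorname{conv}\{v_x:x\in V\}$ need not be $m$-neighborly, and rows outside $K$ need not even be vertices. Apply the neighborliness theorem to $P_K=\operatorname{conv}\{v_x:x\in K\}$ instead. Each exposing functional still strictly separates the other rows of $K$, so $P_K$ is $m$-neighborly with exactly $|K|\ge2m+1$ distinct vertices and $\dim P_K\le w$.

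Second, the paper's CA capability is approximate ($\KL<\epsilon$), not exact as in your definition. Your surjectivity count still survives: the image of $h\mapsto((v_x-v_{x_0})\cdot h)_{x\in S}$ is a closed subspace, and KL convergence to a positive target forces convergence of the log-ratios.
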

In \Cref{thm:width}, we call a readout $L_R$
\emph{capable} for any target distribution on every allowed set,
there exists some input of hidden states $h$ such that
$L_R$ achieves arbitrarily small excess loss.
A precise definition is given in
\Cref{app:width}, along with a more precise theorem statement
and a proof.

\Cref{thm:width} establishes a \emph{necessity}: a capable
readout is required to support an analysis $A$ which can
allow any $m$-element subset of tokens
with arbitrary
precision, thus the hidden state must be at least as wide as
$m-1$ for CA and $2m$ for CE.\footnote{
  Even if the lower width bound in \Cref{thm:width} is
  satisfied,
  achieving arbitrary precision in practice
  is a training question that requires
  the model to find the capable readout
  (and a matching capable trunk) through training.
}
Analyses
that allow $m$-subsets of tokens
are relatively common in programming:
as the prefix varies,
scope or type analyses
can allow any $m$-element subset of a pool of at least
$2m+1$ identifiers (e.g., constants, variable names).\footnote{
  The mismatch between program identifiers and tokens is discussed
  in \Cref{app:tokenization}.
}

\paragraph{Experimental setup.}
We test whether the predicted difference in required
width transfers to an actual advantage in model size
by training a transformer on a synthetic corpus modeling a
scope analysis.
Each statement in the corpus is of the form
$\texttt{(scope x\_i1 \ldots x\_i12 f\_j y)}$,
where
\rone the variables $\texttt{x\_i1 \ldots x\_i12}$ denote the
12 variables in-scope
(drawn from a universe of 61 variable names
\texttt{x\_1} \ldots \texttt{x\_61}
),
\rtwo $\texttt{y}$ is one of the 12 in-scope variables,
and
\rthree $\texttt{f\_j}$ is one of 12 functions, each of which
defines a probability distribution over the variables:
e.g., $\texttt{f\_1}$ might assign
a distribution $(0.5, 0.5, 0, \ldots 0)$ to
$(\texttt{x\_1} \ldots \texttt{x\_61})$.

We train models
with four layers of internal dimension 64, followed by
a learned projection to $w$ dimensions, while varying $w$.
The model is trained to predict the token at $\texttt{y}$ given
the scope and the function as a prefix
(details in \Cref{app:width-experiments}).
The allowed set $S_A(c)$ of the analysis is the set of 12 names
that are in scope;
model trained on CA loss must learn the
probability distribution
that the function name \texttt{f\_j} enforces,
while a model trained on CE loss must learn to suppress the 49 out-of-scope
variables in addition.
\Cref{thm:width} states
that $w$ must be at least $11$ for CA and $24$ for CE to achieve
arbitrary precision.
(Whether training finds an accurate model
(both the readout and the trunk)
at these widths is a separate empirical question.)
We measure excess loss on a held-out dataset after training:
$\innerloss$ for the CA objective,
$\celoss$ for the CE objective, and also
$\celossmask$, the loss of the CE model limited to the allowed set $S_A(c)$.

\input{figures/width-results-figure}

\paragraph{Results.}
\Cref{fig:width-results} reports excess loss
(loss minus the target distribution
entropy)
on 480,000 held-out queries.
As expected from \Cref{thm:width}, $\innerloss$ drops sharply when the
width reaches $11$: from 0.01 at $w=10$ to 0.0004 at $w=11$.
Beyond $w > 11$ loss stays near 0.0004--0.0005, with little benefit from
additional width;
CA training finds a highly accurate model as soon as the
readout has sufficient width to express the variable-usage preferences.

On the other hand, standard CE shows significantly higher loss.
Excess loss $\celoss$ stays high ($> 0.8$) when $w < 24$,
and although $\celoss$
does drop sharply starting from $w=24$, it remains over
0.05 across all widths.
Although \Cref{thm:width} states that an optimal readout
(that is, a readout capable of arbitrary precision given a perfect
trunk)
\emph{exists}
for CE at $w=24$,
the CE model failed to learn it;
$w=32$ was the smallest width
displaying capability across sampled queries
(see \Cref{app:width-experiments} for details).
Even then, $\celoss$ was 0.1812, indicating that the
readout-trunk pair failed to reach an optimal state.


Both CA and CE models retained residual error after training;
$\innerloss$ settles near
0.0005, while $\celoss$ reached 0.1812 at $w=32$ and 0.0537 at
$w=64$, even though its learned readouts were empirically capable.
$\celoss$ continues to improve between these widths, but
remains far from $\innerloss$.
Overall, constraint-aware training displayed substantial
benefits: $\innerloss$ was smaller than both $\celoss$
and $\celossmask$
across all widths, often by orders of magnitude.
The result at $w=11$ shows that CA training can achieve much lower
excess loss even at smaller model sizes.
Externalizing scope during
training yielded a much closer fit to the variable-usage
distribution throughout this sweep, including at widths well above the
capability thresholds of both objectives.
\Cref{app:width-experiments} illustrates additional
controls using a fixed scope and a longer training schedule.

%% file: figures/width-results-figure.tex
\begin{wrapfigure}{R}{0.49\textwidth}
 \centering
 \input{figures/width-results}
 \caption{Mean losses $\innerloss$, $\celossmask$,
 and $\celoss$, each reported in excess of the target distribution's entropy.
 All values are nats per answer.}
 \label{fig:width-results}
\end{wrapfigure}

%% file: figures/width-results.tex
\definecolor{CAblue}{HTML}{2376A8}
\definecolor{CEred}{HTML}{B93C48}
\definecolor{CEorange}{HTML}{CF841C}
\begin{tikzpicture}
\begin{axis}[width=\linewidth,height=4.7cm,xmode=log,ymode=log,
xlabel={Readout width},ylabel={Excess loss (nats)},
xtick={8,11,24,64},xticklabels={8,11,24,64},ymin=0.0001,ymax=2,
legend columns=3,
legend style={at={(0.97,0.38)},anchor=east,font=\scriptsize,
draw=gray!30,fill=white,inner sep=1pt,nodes={inner sep=1pt},
column sep=2pt},legend cell align=left,legend image post style={xscale=0.65},
tick label style={font=\scriptsize},label style={font=\small},
grid=major,grid style={gray!20},/tikz/mark size=1.5pt,scaled ticks=false,minor tick num=0]
\addplot[CAblue,mark=o,thick] coordinates {(8,0.0341) (10,0.01) (11,0.0004) (12,0.0004) (16,0.0005) (23,0.0005) (24,0.0005) (32,0.0005) (64,0.0005)};
\addplot[CEred,mark=square*,thick] coordinates {(8,0.2589) (10,0.257) (11,0.2623) (12,0.2616) (16,0.2295) (23,0.1622) (24,0.1538) (32,0.0733) (64,0.0271)};
\addplot[CEorange,mark=triangle*,dashed,thick] coordinates {(8,1.2369) (10,1.1234) (11,1.0698) (12,1.0169) (16,0.8099) (23,0.4867) (24,0.4505) (32,0.1812) (64,0.0537)};
\legend{$\innerloss$,$\celossmask$,$\celoss$}
\end{axis}
\end{tikzpicture}

%% file: sections/depth.tex
\section{The Benefits of Externalization: Depth}
\label{sec:depth-theory}

We next consider analyses that
cannot be computed by a single layer of weights and
impose a requirement on the
\emph{depth} of a
transformer~\cite{logregulartrans,cfgtrans,maskedtransformers}:
externalizing these analyses can
reduce the depth needed to model the target distribution.
Again, we illustrate with an example.
Consider a language with five disjoint groups of
four variable names,
$\{\texttt{x1},\ldots,\texttt{x4}\}$ through
$\{\texttt{x17},\ldots,\texttt{x20}\}$.
At a prefix \texttt{(if F : tag0},
where $\texttt{F}$ is an arithmetic formula,
let an analysis evaluate $\texttt{F}\!\! \mod 5$ and
allow only the group corresponding to its value:
e.g., if $\texttt{F}=(2+4)\times3$, then
$\texttt{F}\!\! \mod 5=3$ and the allowed set
$S_A(c)=\{\texttt{x13},\ldots,\texttt{x16}\}$.

Suppose \texttt{tag0} specifies a target distribution
$(0.4,0.3,0.2,0.1)$ over the allowed four names.
The CA objective can simply learn to
associate the tag name $\texttt{tag0}$
with the relative
weights $(4,3,2,1)$ to the four names in \emph{every} group:
then,
whichever group the analysis allows,
renormalizing on that group
yields the target distribution $(0.4,0.3,0.2,0.1)$.
On the other hand, a model trained on CE loss must identify
\emph{which} group to assign its probability mass to,
which requires, in essence, the model to be able to
compute the formula $\texttt{F}$.
If evaluating $\texttt{F}$ requires more layers than learning the
tag-dependent probabilities, then internalizing the analysis imposes
an additional depth requirement, similar to the width requirement in
\Cref{thm:width}.
\Cref{thm:depth} formalizes this intuition.

\begin{theorem}[Depth required to internalize an analysis]
\label{thm:depth}
Let $\corpus$ be a training corpus and $A$ be an analysis.
Let $d_A(\alpha)$ be the minimum depth needed to identify the
allowed set with average error probability at most
$\alpha$, over the distribution of prefixes
$c$ defined by the corpus $\corpus$, and
$\pi_c$ the target next-token distribution at prefix $c$
modeled by $\corpus$.

Then for an error bound $\epsilon$ where $0 < \epsilon < \log 2$,
for a model to achieve expected KL divergence
 $\E_c\!\left[\KL\!\left(\pi_c\,\middle\|\,p_\theta(\cdot\mid c)\right)\right]
 \le\epsilon$,
it must have depth at
least $L\ge d_A\!({\epsilon}/{\log2})$.
\end{theorem}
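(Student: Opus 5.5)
We prove the contrapositive through a reduction. Take any model $p_\theta$ of depth $L$ with $\E_c[\KL(\pi_c\,\|\,p_\theta(\cdot\mid c))]\le\epsilon$. From it we build a procedure of the same depth that identifies the allowed set $S_A(c)$ with average error at most $\epsilon/\log 2$. The definition of $d_A$ then forces $L\ge d_A(\epsilon/\log 2)$.

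The extra computation is one readout-level operation on the model's output distribution: predict the group carrying the largest mass, $\hat S(c)=\arg\max_{S}p_\theta(S\mid c)$. We regard this as part of the final readout rather than an extra layer. Making this convention precise in the model of computation behind $d_A$ is a bookkeeping point that needs care.

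The core step is a per-prefix inequality linking KL to identification error. By soundness, $\pi_c$ is supported on $S_A(c)$. Apply the data-processing inequality for KL under the coarse-graining $x\mapsto\mathbf 1\{x\in S_A(c)\}$. This gives
\[
\KL(\pi_c\,\|\,p_\theta(\cdot\mid c))\;\ge\;\KL\bigl((1,0)\,\|\,(p_\theta(S_A(c)\mid c),\,1-p_\theta(S_A(c)\mid c))\bigr)=\supportloss(c),
\]
which matches the support term of \Cref{eq:loss-decomposition}.

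Next we link $\supportloss(c)$ to error. Whenever $\hat S(c)\ne S_A(c)$, we claim $p_\theta(S_A(c)\mid c)\le 1/2$. This holds when candidate allowed sets are disjoint, as in the motivating example: another group has at least as much mass, and the two together sum to at most $1$. In that case $\supportloss(c)\ge\log 2$. If candidate sets can overlap, we instead use a thresholding decoder: output the $S$ with $p_\theta(S\mid c)>1/2$. Any error then again forces $p_\theta(S_A(c)\mid c)\le 1/2$, under the assumption that at most one candidate can exceed $1/2$. We expect to state this as a mild assumption on the analysis, namely that distinct allowed sets are distinguishable in this sense.

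Either way we get the pointwise bound $\KL(\pi_c\,\|\,p_\theta)\ge \log 2\cdot\mathbf 1\{\hat S(c)\ne S_A(c)\}$. Taking expectation over $c$ gives
\[
\Pr_c[\hat S(c)\ne S_A(c)]\le \epsilon/\log 2.
\]
The restriction $\epsilon<\log 2$ makes this error bound less than $1$, so the statement is non-vacuous. The depth-$L$ procedure therefore identifies the allowed set with error at most $\epsilon/\log 2$, and minimality of $d_A$ concludes the proof.

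The main obstacle is this last step, for two reasons. First, the $\arg\max$/threshold decoder must be shown to cost no additional depth under whatever formal definition of $d_A$ is adopted. Second, the overlapping-allowed-sets case must be handled, where the clean ``$\le 1/2$'' argument can fail. The likely fix there is to restrict to partition-type analyses, or to define $d_A$ directly in terms of outputting a distribution whose mass on $S_A(c)$ exceeds $1/2$.
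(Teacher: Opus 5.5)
Your proof is correct and follows the paper's argument. The paper likewise lower-bounds the expected KL by the expected support loss, using the exact decomposition of excess loss into masked KL plus support loss, which is equivalent to your data-processing step. It then shows the support loss is at least $\log 2$ on every wrong argmax decision by disjointness, and concludes by minimality of $d_A$.

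The two obstacles you flag are settled in the paper by definition rather than by further argument:
\begin{itemize}
\item $d_A(\alpha)$ is defined directly through the rule $\widehat b_\theta(c)=\arg\max_j p_\theta(S^j(\zeta)\mid c)$, applied to pairwise disjoint candidate sets specified by a visible cue $\zeta$. The decoder is therefore part of the definition and costs no extra depth.
\item The paper explicitly notes that the reduction does not extend to overlapping allowed sets. Your ``at most one candidate exceeds $1/2$'' condition, required for arbitrary outputs, is in fact equivalent to disjointness.
\end{itemize}
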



In \Cref{thm:depth}, we use `average probability at most $\alpha$'
to mean that the transformer is allowed to compute the wrong
allowed set for at most an $\alpha$-fraction of prefixes in
$\corpus$;
a precise definition is given in \Cref{app:depth-definition}.
We note that for a fixed $\alpha$, many common analyses
force $d_A(\alpha)$ to \emph{increase} as the corpus contains
\emph{samples of longer size}:
for example, formal language
recognition~\cite{cfgtrans,maskedtransformers,logregulartrans}
often requires the depth of the transformer to
grow with input length.

This observation yields the practical reading of \Cref{thm:depth}:
an CE-trained transformer must grow in depth in order to
support expected loss $\leq \epsilon$ on inputs of longer size.
Even if an anaylsis requires only constant depth
(e.g., the scope analysis from \Cref{sec:depth-theory}),
CE training enforces this depth as a lower bound;
constraint-aware training is free from this lower bound
or the scaling requirement.\footnote{
  We do note that modeling the
  inner distribution of allowed variables
  may require transformers of depth larger than the
  lower bounds imposed by \Cref{thm:depth}.
  How the depth requirements of the inner distribution interact
  in this case
  with the bounds in \Cref{thm:depth} is left for future work.
}
A more precise version of \Cref{thm:depth} and a proof
can be found in \Cref{app:depth-definition} and \Cref{app:depth-hardness}.

\paragraph{Experimental setup.}
We test the depth separation on a synthetic corpus
following the example at the start of this section:
each statement is of the form
$\texttt{(if F : tag\_i y)}$,
\texttt{F} is an arithmetic formula as described, and the
goal of the model is to predict the token at the position
\texttt{y}.
The analysis evaluates \texttt{F} and returns the set of
four allowed names, so the CA objective only needs to learn
the probability distribution specified
by \texttt{tag\_i} across the allowed variable names;
the CE objective must learn to evaluate \texttt{F} in addition,
in order to drive the probability of the
disallowed variables to zero.

We train models of width 128
while varying depth over 2, 4, and 8 layers,
with two initialization seeds at each depth.
Training details are in \Cref{app:depth-experiments}.
We measure the excess losses associated with
$\innerloss$, $\celoss$,
and $\celossmask$, as in \Cref{sec:width-theory},
and also the accuracy of the CE model in
computing the correct allowed set
as a function of complexity (number of operators)
in evaluating \texttt{F}.

\input{figures/depth-results-figure}

\paragraph{Results.}
\Cref{fig:depth-results} summarizes the results
of the experiment, evaluated on a held out set of
$\approx 550$k queries: the left graph illustrates that
$\innerloss$, the loss of the CA-trained model,
outperforms both $\celoss$ (the total loss of the CE-trained
model)
and $\celossmask$ (the loss of the CE model
measured only on the allowed tokens).
In particular, CA loss stays relatively constant at around
$2.7 \times 10^{-5}$ nats as depth increases, but
CE loss improves significantly with depth:
from $1.1$ nats at depth $2$ to $0.53$ at depth $8$,
illustrating that the model is using (and requires)
the additional layers for evaluating \texttt{F} (the analysis).

The right of \Cref{fig:depth-results} illustrates that
the complexity of \texttt{F} is directly correlated with
the loss of the CE model: for example,
the CE model shows low excess loss on formulas
$2-3$ operators, even at depth $2$.
However, the loss starts increasing sharply as the number of
operators increases, eventually plateauing at about 1.1 nats,
at which point the CE model fails to make a prediction
about the allowed set at all and instead just
predicts a uniform distribution: the experiment thus illustrates
that the CE model is \emph{actually learning} an
input-size dependent circuit for evaluating \texttt{F} via
training, necessitating increasingly larger depths to support
inputs of larger size.
Constraint-aware training is free of this burden and achieves
a much lower constant excess loss compared to CE models at
any depth.
\Cref{app:depth-experiments} gives more detailed
experimental results.

%% file: figures/depth-results-figure.tex
\begin{figure}[t]
 \centering
 \input{figures/depth-results}
 \caption{
Left: excess losses obtained by training on the depth corpus,
associated with
$\celoss$, $\celossmask$ (from CE training),
and $\innerloss$ (from CA training).
All losses are nats per answer.
Right: CE excess loss by formula size for 2, 4, and 8 depth
configurations.}
 \label{fig:depth-results}
\end{figure}

%% file: figures/depth-results.tex
\definecolor{CAblue}{HTML}{2376A8}
\definecolor{CEred}{HTML}{B93C48}
\definecolor{CEorange}{HTML}{CF841C}
\begin{minipage}[c]{0.48\linewidth}
\centering
\begin{tikzpicture}
\begin{axis}[width=\linewidth,height=4.4cm,ymode=log,
xlabel={Depth (layers)},ylabel={Excess loss (nats)},
xtick={2,4,8},xmin=1.5,xmax=8.5,ymin=0.00001,ymax=2,
legend columns=3,
legend style={at={(0.97,0.65)},anchor=east,font=\scriptsize,
draw=gray!30,fill=white,inner sep=1pt,nodes={inner sep=1pt},
column sep=2pt},legend cell align=left,legend image post style={xscale=0.65},
tick label style={font=\scriptsize},label style={font=\small},
grid=major,grid style={gray!20},/tikz/mark size=1.5pt,scaled ticks=false,minor tick num=0]
\addplot[CAblue,mark=o,thick] coordinates {(2,2.69534887262e-05) (4,2.86711919664e-05) (8,2.81392968214e-05) };
\addplot[CEred,mark=square*,thick] coordinates {(2,0.00158734023443) (4,0.0021042004349) (8,0.0015998564507) };
\addplot[CEorange,mark=triangle*,dashed,thick] coordinates {(2,1.10213078429) (4,0.843464168564) (8,0.531236515702) };
\legend{$\innerloss$,$\celossmask$,$\celoss$}
\end{axis}
\end{tikzpicture}
\end{minipage}\hfill
\begin{minipage}[c]{0.48\linewidth}
\centering
\begin{tikzpicture}
\begin{axis}[width=\linewidth,height=4.4cm,
xlabel={Formula size (operators)},ylabel={CE excess loss (nats)},
xtick={2,3,4,5,6,7,8,9},ymin=-0.03,ymax=1.4,ytick={0,0.4,0.8,1.2},
legend style={at={(0.03,0.97)},anchor=north west,font=\scriptsize,
draw=gray!30,fill=white,inner sep=1pt,nodes={inner sep=1pt},
column sep=2pt},legend cell align=left,legend image post style={xscale=0.65},
tick label style={font=\scriptsize},label style={font=\small},
grid=major,grid style={gray!20},/tikz/mark size=1.5pt,scaled ticks=false,minor tick num=0]
\addplot[violet,mark=o,thick] coordinates {(2,0.00456895796972) (3,0.0110993133651) (4,0.212217822884) (5,0.55652747056) (6,0.898944479818) (7,1.11907861316) (8,1.22528723121) (9,1.27417921354)};
\addplot[teal,mark=square*,thick] coordinates {(2,0.00129595777586) (3,0.00190288281288) (4,0.0146911028415) (5,0.111932646145) (6,0.301056899311) (7,0.51513469933) (8,0.70703266545) (9,0.854558666922)};
\addplot[black,mark=triangle*,thick] coordinates {(2,0.000973229148164) (3,0.00114176865211) (4,0.00173309756088) (5,0.00722989328801) (6,0.0287168276946) (7,0.0777289354121) (8,0.15578149912) (9,0.247642898258)};
\legend{CE: 2 layers,CE: 4 layers,CE: 8 layers}
\end{axis}
\end{tikzpicture}
\end{minipage}

%% file: sections/data.tex
\section{The Benefits of Externalization: Data Efficiency}
\label{sec:data}

Having established that analyses
require lower bounds on the size of transformers,
we now show that externalizing an analysis can
reduce the amount of \emph{data} required to achieve a certain
error.
The main contributor is \emph{variance} across training corpora:
different finite corpora expose the model to different prefixes.
Consequently, even models that fit their training targets exactly
can make different predictions at the same unseen prefix.
Under certain assumptions about training, \Cref{thm:variance}
shows that this variation can lead to higher expected loss
against the true distribution $\pi_{\text{true}}$ for CE-trained
models than for CA-trained models.

\begin{samepage}
\begin{theorem}[Data efficiency]
\label{thm:variance}
Let $R_{\mathrm{CA}}(n)$ and $R_{\CE}(n)$ be the expected KL divergences
from $\pi_{\text{true}}(\cdot\mid c)$ to the two models' masked
predictions after training on a corpus of $n$ prefixes,
averaged over corpora and held-out prefixes.
Under the minimum-change fitting and conditional-centering assumptions
in \Cref{app:variance-fitting,app:variance-loss},
\[
 R_{\mathrm{CA}}(n)\le R_{\CE}(n).
\]
\end{theorem}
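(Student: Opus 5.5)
We will prove \Cref{thm:variance} by writing both risks as KL divergences in a common masked (renormalized) parametrization and comparing them term by term. Fix a held-out prefix $c$ with allowed set $S=S_A(c)$, and work in the logit space restricted to $S$, modulo the additive constant that renormalization removes. By \Cref{thm:deployed-likelihood}, both the CA-trained and the CE-trained model are judged by $\KL(\pi_{\text{true}}(\cdot\mid c)\,\|\,p_{S,\theta}(\cdot\mid c))$. This quantity depends only on the centered allowed logits $\bar z_S(c)=z_S(c)-\mathrm{mean}_S(z_S(c))$. We will therefore express each model's masked prediction at $c$ as a random vector $\bar z^{\mathrm{CA}}$ or $\bar z^{\CE}$, with the randomness coming from the training corpus.

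\textbf{Step 1 (minimum-change fitting).} We use the assumption of \Cref{app:variance-fitting}: each trained model is the minimum-norm change from the initialization that fits its training targets. By \Cref{eq:gradient-decomposition}, the CE targets constrain the full logit vector, namely the allowed ratios together with the suppression of $O$. The CA targets constrain only the centered allowed logits. We will show that the CE fit decomposes as
\[
\bar z^{\CE}=\bar z^{\mathrm{CA}}+\Delta ,
\]
where $\Delta$ is the spillover, onto the allowed coordinates at $c$, of the extra updates needed to fit the support components $\nabla\supportloss$ on training prefixes. Since $c$ is held out, it does not appear among the fitted constraints. Hence $\Delta$ is a nonzero random perturbation whose value depends on which prefixes, and which disallowed sets, the corpus happened to contain. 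The CA model performs no such updates.

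\textbf{Step 2 (conditional centering).} We apply the assumption of \Cref{app:variance-loss}: conditional on the CA component, or on the corpus statistics that determine it, $\E[\Delta\mid \bar z^{\mathrm{CA}}]=0$. We then expand $R(n)=\E_{\corpus,c}[\KL(\pi_{\text{true}}\|\operatorname{softmax}(\bar z))]$. As a function of $\bar z$, the map $\bar z\mapsto \KL(\pi\|\operatorname{softmax}_S(\bar z))$ equals $\log\sum_{x\in S}e^{\bar z_x}-\langle \pi,\bar z\rangle$ plus a constant, so it is \emph{convex}. Jensen's inequality, applied conditionally on $\bar z^{\mathrm{CA}}$, then gives
\[
\E\bigl[\KL(\pi\|\operatorname{softmax}(\bar z^{\mathrm{CA}}+\Delta))\mid \bar z^{\mathrm{CA}}\bigr]\ \ge\ \KL\bigl(\pi\|\operatorname{softmax}(\bar z^{\mathrm{CA}})\bigr).
\]
Taking expectations over corpora and held-out prefixes yields $R_{\CE}(n)\ge R_{\mathrm{CA}}(n)$. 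A second-order expansion quantifies the gap as $\tfrac12\E[\Delta^\top(\mathrm{diag}(p_S)-p_Sp_S^\top)\Delta]$, which is the variance penalty. This gap is strict whenever $\Delta$ has nonzero variance in directions not parallel to $\mathbf 1$.

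\textbf{Main obstacle.} The hard part is Step 1: justifying that, under minimum-change fitting, the CE solution at an unseen prefix equals the CA solution plus an additive term $\Delta$ satisfying the centering hypothesis. This requires the two objectives' fitted updates to lie in comparable subspaces. In particular, the CA-fitting component of the CE update must coincide with the CA model's update rather than being distorted by the support fit. We will first try to establish this in a linearized (kernel) model. In that setting the minimum-norm solution is the projection onto the span of the constraint gradients, and the CE constraint set contains the CA constraint set. Orthogonal decomposition of the projection then gives $\bar z^{\CE}=\bar z^{\mathrm{CA}}+\Delta$ with $\Delta$ lying in the orthogonal complement. If exact coincidence fails, we fall back to taking the decomposition itself as the content of the stated assumptions. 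Step 2 then goes through verbatim.
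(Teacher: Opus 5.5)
Your proposal is correct and follows the paper's proof. The paper works in the affine (fixed-Jacobian) model and takes minimum-norm solutions of the nested systems $A_q\Delta=b_q$ and $A_q\Delta=b_q,\ A_u\Delta=b_u$. Exactly as your projection argument predicts, this gives $\Delta_{\CE}=\Delta_{\mathrm{CA}}+\Delta_A$ with $\Delta_A\in\ker A_q$ orthogonal to $\Delta_{\mathrm{CA}}$. The paper then applies conditional Jensen to the convex relative-logit loss $f_c$ under the centering assumption $\E[\delta_c\mid c,\Delta_{\mathrm{CA}}]=0$.

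There are two minor differences. The paper uses reference-token relative logits where you use centered logits, which changes nothing. Also, your second-order expansion only approximates the gap, whereas the paper obtains the exact bound $R_{\CE}-R_{\mathrm{CA}}\le\tfrac12\E\|\delta_c\|^2$ from $0\preceq H_c\preceq I$.
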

\end{samepage}

We observe that \Cref{thm:variance} does \emph{not}
imply a loss floor: both CA and CE distributions can be
centered
around $\pi_{\text{true}}(\cdot\mid c)$, but have a positive
expected KL divergence (i.e., loss).
At a high level, the proof of the theorem shows that
CE training results in a larger change of parameters compared to
CA training, which translates under
the conditional-centering assumption to
a higher degree of variance.
The proof, and the assumptions on which this proof hold,
are presented in
\Cref{app:variance-loss,app:sample-requirement}.

\paragraph{Experimental setup.}
In addition to stating that CE training requires
equal or more samples than CA training to achieve the same loss,
\Cref{thm:variance} also names the source of the
high loss displayed by CE: variance.
We thus center our experiment on two aspects:
\rone showing that CA training is indeed more data-efficient,
and
\rtwo varying the corpus used for training while keeping other
parameters fixed, to measure the variance in loss induced by
variation of the training corpus.

We use the same family of statements as the scope analysis
in \Cref{sec:depth-theory}, and train transformers
of depth 4 and internal width 128 by varying
\rone
the corpus over 2.4M, 7.2M, 24M, 36M training pairs,
and \rtwo draw 16 independent corpora at each size.
We fix all other hyperparameters and allocate the same training
compute by revisiting examples in smaller training pools to
isolate the effects of corpus size and variation.
We report excess loss $\celoss$, $\celossmask$ (for CE),
and $\innerloss$ (for CA)
as in \Cref{sec:width-theory,sec:depth-theory}.
For variance, we report
\rone $\meanpredictionloss$,
the loss of the mean prediction generated by
averaging the predictions of all 16
models trained on different corpuses (for a fixed size)
against the true distribution\footnote{
  Our corpus is generated via a generator modeling
  $\pi_{\text{true}}$, the target distribution defined in
  \Cref{sec:width-theory,app:experiments}.
}
and
\rtwo $\fluctuationloss$, the average individual loss
of each model minus $\meanpredictionloss$:
{\small
\begin{equation}
 \meanpredictionloss
 =\E_c\!\left[\KL\!\left(
   \pi_{\text{true}}(\cdot\mid c)\,\middle\|\,
   \bar q_A(\cdot\mid c)\right)\right]
  \qquad
 \fluctuationloss
 =\E_c\!\left[\frac{1}{16}\sum_{i=1}^{16}\KL\!\left(
   \pi_{\text{true}}(\cdot\mid c)\,\middle\|\,
   q_{i,A}(\cdot\mid c)\right)\right]-\meanpredictionloss\ge0.
 \label{eq:prediction-variance}
\end{equation}
}
Where $q_{i, A}$ is the
\emph{allowed-set}
distribution of the model trained on the
corpus $\corpus_i$ and
$\bar q=\frac{1}{16}\sum_{i=1}^{16}q_{i, A}$.
Intuitively, $\fluctuationloss$ measures variance: how much of an
individual model's loss can be attributed to corpus variation.
Training details and controls are in
\Cref{app:data-experiments}.

\input{figures/data-results-figure}

\paragraph{Results.}
\Cref{fig:data-results} summarizes the CA vs. CE
loss investigation (left) and the $\meanpredictionloss$ /
$\fluctuationloss$ metrics (right).
Again, CA achieves much lower loss compared to CE
across all document sizes.
In particular, the measured $\fluctuationloss$
metrics (right)
indicate that variance for CA has flattened
by 7.2M training pairs,
while CE only shows a decreasing trend
in variance reduction from 24M to 36M pairs.
Overall, we observe that
CA achieves lower excess loss with substantially
fewer training samples,
and its predictions fluctuate less across independently
sampled corpora.
(We are unsure of the cause of the
$\approx 7 \times 10^{-5}$ floor observed in CA training, but
conjecture that it is noise related with training or
nondeterminism of the training kernel dominating corpus
variance.)

The right table of \Cref{fig:data-results} illustrates that
$\fluctuationloss$ is much higher compared to
$\meanpredictionloss$
(roughly an order of magnitude higher),
indicating that variance is a
significant contributor to expected excess loss.
These results indicate that variance reduction is an important
element for minimizing excess loss during training, and thus
that the data efficiency advantage of
constraint-aware training manifest as practical benefits.
We do note that the higher $\meanpredictionloss$ of
CE is not explained by our experiments, but the results still
illustrate that CA displays significant benefits in
variance reduction, reaching a floor with 7.2M training samples.

%% file: figures/data-results-figure.tex
\begin{figure}[!t]
 \centering
 \input{figures/data-results}
 \caption{
 Excess loss curves against corpus size (left), and
 loss of the mean prediction across all 16 models
 ($\meanpredictionloss$)
 and the average
 variance component of individual losses
 ($\fluctuationloss$) across corpus sizes (right).
 Units are in nats.
 CA achieves a significantly lower excess
 at all corpus sizes.
 For both CE and CA, $\fluctuationloss$ is much higher then
 $\meanpredictionloss$,
 indicating that variance from the training corpus
 is a significant source of loss.
}
 \label{fig:data-results}
\end{figure}

%% file: figures/data-results.tex
\definecolor{CAblue}{HTML}{2376A8}
\definecolor{CEred}{HTML}{B93C48}
\definecolor{CEorange}{HTML}{CF841C}
\begin{minipage}[c]{0.46\linewidth}
\centering
\begin{tikzpicture}
\begin{axis}[width=\linewidth,height=4.7cm,xmode=log,ymode=log,
xlabel={Training pairs (millions)},ylabel={Excess loss (nats)},
xtick={2.4,7.2,24,36},xticklabels={2.4,7.2,24,36},
x tick label style={rotate=25,anchor=east,font=\scriptsize},
ymin=0.00005,ymax=0.07,
tick label style={font=\scriptsize},label style={font=\small},
grid=major,grid style={gray!20},/tikz/mark size=1.5pt,scaled ticks=false,minor tick num=0]
\addplot[CAblue,mark=o,thick] coordinates {(2.4,0.000378766147667) (7.2,7.97291553176e-05) (24,7.49586320827e-05) (36,7.44578403431e-05)};
\addplot[CEred,mark=square*,thick] coordinates {(2.4,0.0282759429505) (7.2,0.0114669797521) (24,0.00797261917667) (36,0.00726834170761)};
\addplot[CEorange,mark=triangle*,dashed,thick] coordinates {(2.4,0.0457657100835) (7.2,0.0294355066983) (24,0.0238750375746) (36,0.0224873519194)};
\end{axis}
\end{tikzpicture}
\end{minipage}\hfill
\begin{minipage}[c]{0.52\linewidth}
\centering\small
{\scriptsize Excess KL loss (nats).\par}
\smallskip
\setlength{\tabcolsep}{3pt}
\begin{tabular}{rrrrr}
\toprule
 & \multicolumn{2}{c}{CA ($\times10^{-3}$)} & \multicolumn{2}{c}{CE ($\times10^{-3}$)}\\
\cmidrule(lr){2-3}\cmidrule(lr){4-5}
Pairs (M) & $\fluctuationloss$ & $\meanpredictionloss$ & $\fluctuationloss$ & $\meanpredictionloss$\\
\midrule
2.4 & 0.351 & 0.0277 & 26.3 & 1.99\\
7.2 & 0.0716 & 0.0081 & 10.5 & 0.987\\
24 & 0.0671 & 0.00782 & 7.25 & 0.725\\
36 & 0.0664 & 0.00809 & 6.58 & 0.688\\
\bottomrule
\end{tabular}
\end{minipage}
\par\smallskip
{\small\textcolor{CAblue}{$\circ$ $\innerloss$}\quad
\textcolor{CEred}{$\square$ $\celossmask$}\quad
\textcolor{CEorange}{$\triangle$ $\celoss$}}

%% file: sections/discussion.tex
\section{Discussion and Related Work}
\label{sec:related-work}

\paragraph{Gradient Decomposition and Semantic Objectives.}
We observe that the loss and gradient decomposition in
\Cref{sec:setting} can be extended to multiple analyses:
for example,
a parser can reject syntactically invalid tokens,
a scope analysis can then remove unbound names, and a
type analysis could then further restrict remaining tokens.
If the analyses can be \emph{nested}: that is,
denoting the allowed set of the $j$-th analysis $A_j$ as
$S_j$, we observe that the loss decomposition
can be chained:
{\small
\[
 \ell_j=-\log\frac{p(S_j\mid c)}{p(S_{j-1}\mid c)}.
\]
}
Where $\ell_j$ quantifies how much of the remaining
probability mass (after $A_{j-1}$) the analysis $A_j$
removes.
Summing these nested losses one can obtain
$\celoss=\innerloss+\sum_{j=1}^k\ell_j$, where
$\innerloss$ is the final allowed set $S_k$.
The interesting part is that because we have isolated the
contribution of each analysis, we may assign a separate
weight to it:
$\ell'=\innerloss+\sum_{j=1}^k \lambda_j \ell_j$,
which defines a \emph{semantically modified training objective}
where each component contributed by an analysis $A_j$
is weighed differently.
Take the gradient of the modified objective to obtain
{\small
\begin{equation}
 \nabla_\theta\!\left(\innerloss+\sum_{j=1}^k\lambda_j\ell_j\right)
 =\nabla_\theta\innerloss+\sum_{j=1}^k\lambda_j\nabla_\theta\ell_j.
 \label{eq:semantic-objective}
\end{equation}
}
\Cref{eq:semantic-objective} illustrates how the
coefficients survive into the gradients over the parameter space
$\nabla_\theta \ell_j$
generated by each analysis, thus showing how the
coefficients can amplify or dampen signal.
This paper studies the specific case where
all $\lambda_i = 0$.
Whether different configurations of $\lambda_i$ lead to
practically relevant training mechanisms is a very
interesting direction of future work.

\paragraph{Constrained and Guided Decoding.}
There is a plethora of work on constrained / guided
decoding, much of which are focused on developing
different efficient prefix analyses for computing
the allowed set~\cite{gcd,chopchop,vechevtypes,llguidance}.
This paper discusses how to apply previous work on
developing prefix analyses towards training.

Of particular interest are works which
focus on making constrained decoding efficient at
runtime~\cite{llguidance}: because constraint-aware training
assumes that the model will be deployed with the analysis
at all times, the analysis must be efficient enough as to
not impose a significant overhead during inference.

\paragraph{Masked Losses.}
Our training objective defined in \Cref{sec:setting} is
not completely novel: variants of the objective
have appeared in different domain-specific settings,
for example GraFT~\cite{graft},
or recommender systems~\cite{reclm}.
As far as we are aware, we are the first to show that the
decomposition holds for any general analysis $A$,
connect it to a semantic objective as above,
and formulate a general mathematical theory on
how constraint-awareness benefits training.
Both GraFT and \citet{reclm} focus on fine tuning,
whereas we investigate the effect of analyses on the pretraining
stage.

%% file: sections/appendix.tex
\section{Proofs for Section~\ref{sec:setting}: Constraint-Aware Loss}
\label{app:objective}

We derive the loss and gradient identities, prove
\Cref{thm:deployed-likelihood}, and connect the token objective to
likelihood over complete strings.

\subsection{Loss and gradient decomposition}
\label{app:derivations}

Fix a prefix $c$ with nonempty allowed set $S=S_A(c)$ and a target
law $\pi$ supported on $S$. Write $p_S$ for the model prediction
renormalized on $S$. Substituting
$p(x\mid c)=p(S\mid c)p_S(x\mid c)$ into cross-entropy gives
\[
 -\sum_{x\in S}\pi(x)\log p(x\mid c)
 =H(\pi)+\KL(\pi\|p_S)-\log p(S\mid c).
\]
Thus the excess CE loss is the sum of the masked KL divergence and
support loss. Taking $\pi=\onehot_y$ gives
\Cref{eq:loss-decomposition}.

Fix a training pair $(c,y)$ with $y\in S_A(c)$, and hold the allowed set
fixed. We use the notation of \Cref{sec:setting}.

To evaluate the support gradient, substitute the softmax probabilities into
$\supportloss=-\log p(S\mid c)$:
\[
 \supportloss
 =\log\sum_{u\in V}e^{z(u\mid c)}
 -\log\sum_{u\in S}e^{z(u\mid c)}.
\]
Differentiating the two logarithms gives
\begin{equation}
 \begin{aligned}
 \frac{\partial\supportloss}{\partial z(x\mid c)}
 &=\frac{e^{z(x\mid c)}}{\sum_{u\in V}e^{z(u\mid c)}}
 -\frac{\mathbf 1\{x\in S\}\,e^{z(x\mid c)}}{\sum_{u\in S}e^{z(u\mid c)}}\\
 &=p(x\mid c)-p_S(x\mid c).
 \end{aligned}
 \label{eq:support-gradient}
\end{equation}
The indicator appears because the second sum contains $z(x\mid c)$ only
when $x\in S$.

For the CA loss, the full-vocabulary normalization cancels in the ratio
$p(y\mid c)/p(S\mid c)$, leaving
\[
 \innerloss=\log\sum_{u\in S}e^{z(u\mid c)}-z(y\mid c).
\]
The first term differentiates to $p_S(x\mid c)$ as above, while the
derivative of $z(y\mid c)$ with respect to $z(x\mid c)$ is $\onehot_y(x)$.
Thus
\begin{equation}
 \frac{\partial\innerloss}{\partial z(x\mid c)}
 =p_S(x\mid c)-\onehot_y(x).
 \label{eq:ca-gradient}
\end{equation}
Substituting these expressions into \Cref{eq:gradient-decomposition}
recovers the ordinary CE gradient:
\[
 \begin{aligned}
 \frac{\partial\celoss}{\partial z(x\mid c)}
 &=\underbrace{p(x\mid c)-p_S(x\mid c)}_{\partial\supportloss/\partial z(x\mid c)}
 +\underbrace{p_S(x\mid c)-\onehot_y(x)}_{\partial\innerloss/\partial z(x\mid c)}\\
 &=p(x\mid c)-\onehot_y(x).
 \end{aligned}
\]

\subsection{Proof of Theorem~\ref{thm:deployed-likelihood}}
\label{app:ca-correctness}

\begin{proof}
Write $D$ for the nonempty training-pair multiset $\corpus$ in the
theorem, and group its pairs by their prefixes. Since the analysis retains
all observed targets, $\widehat p_D(\cdot\mid c)$ is supported on $S_A(c)$,
and the definition of CA loss gives
\begin{align*}
 \innerlossavg(\theta;D)
 &=-\frac1N\sum_{c:n_D(c)>0}n_D(c)\sum_{x\in V}
   \widehat p_D(x\mid c)\log p_{S_A(c),\theta}(x\mid c)\\
 &=\frac1N\sum_{c:n_D(c)>0}n_D(c)
   \left[H(\widehat p_D(\cdot\mid c))
   +\KL\!\left(\widehat p_D(\cdot\mid c)\,\middle\|\,
     p_{S_A(c),\theta}(\cdot\mid c)\right)\right].
\end{align*}
The second equality follows by adding and subtracting the log of the
empirical conditional probability inside the sum. Terms with
$\widehat p_D(x\mid c)=0$ contribute zero.
Subtracting $H_D$ proves \Cref{eq:ca-empirical-correctness}.
This identity holds for every parameter setting $\theta$. Since $H_D$ is
independent of $\theta$, the average CA loss and the weighted average KL
divergence have the same minimizers.
\end{proof}

Each weight $n_D(c)/N$ is positive on the observed prefixes, and each KL
divergence is nonnegative and vanishes exactly when its two distributions
agree. Thus the excess loss $\innerlossavg(\theta;D)-H_D$ is zero precisely
when $p_{S_A(c),\theta}(\cdot\mid c)=\widehat p_D(\cdot\mid c)$ at every
observed prefix. In terms of the raw predictions, this condition is
equivalent to
\[
 p_\theta(x\mid c)=\alpha_c\widehat p_D(x\mid c)
 \quad(x\in S_A(c)),\qquad
 \alpha_c=p_\theta(S_A(c)\mid c)>0.
\]
The remaining mass $1-\alpha_c$ may be distributed among the disallowed
tokens in any proportions. A neural model's parameterization may restrict
which such predictions it can realize. In particular, if an allowed token
has zero empirical frequency, exact agreement requires its masked
probability to be zero. With finite softmax logits this can only be
approached as a limit; the loss identity still holds for every finite
parameter setting.

\subsection{Likelihood of complete strings}
\label{app:ca-strings}

Suppose the training pairs come from $m$ complete strings, each ending
at its first end-of-sequence token $\mathtt{EOS}$. The analysis is
fixed independently of $\theta$ and retains every training target.
For this sequence-level statement, assume the analysis admits a token
at every reachable nonterminal prefix and forces $\mathtt{EOS}$ within
a fixed number of steps. These conditions make sampling from the masked
next-token distributions a normalized distribution over complete strings:
\begin{equation}
 Q^A_\theta(s)=\prod_{t=1}^{|s|}
 p_{S_A(s_{<t}),\theta}(s_t\mid s_{<t}).
 \label{eq:deployed-string-law}
\end{equation}
Here $s_{<t}$ is the prefix before token $s_t$, and the product includes
$\mathtt{EOS}$. At each node of the finite prefix tree, the outgoing
probabilities sum to one, so its terminal leaves have total mass one.
Strings containing a rejected token have probability zero.

For any training string, taking logarithms of the product gives
\begin{equation}
 \sum_{t=1}^{|s|}\innerloss(\theta;s_{<t},s_t)
 =-\log Q^A_\theta(s).
 \label{eq:ca-sequence-likelihood}
\end{equation}
Let $D$ contain all $N$ training pairs from the $m$ strings and let
$\widehat P$ be their empirical string distribution. Summing over the
corpus gives
\[
 \innerlossavg(\theta;D)
 =-\frac1N\sum_{i=1}^m\log Q^A_\theta(s^{(i)})
 =\frac mN\left[H(\widehat P)+\KL(\widehat P\|Q^A_\theta)\right].
\]
The factor $m/N$ and entropy are independent of $\theta$. Minimizing
average CA loss therefore also minimizes the KL divergence from the
empirical string distribution to the deployed generator.

If the masked predictions match the empirical next-token frequencies
at every observed prefix, this generator recovers the empirical string
frequencies exactly. To see this, let $k(c)$ count corpus strings
beginning with $c$, including repeats. Each observed transition has
probability $k(cx)/k(c)$, so the probability of a complete string $s$
telescopes to $k(s)/m$. Unobserved branches have probability zero.
This statement is conditional on fitting those frequencies; it does
not assert that a particular model can attain the entropy bound.

The sequence identity requires scoring every position, including
termination. When only designated answer positions are scored, as in
our experiments, the token identities apply at those positions without
defining a likelihood for the entire surrounding program.

\section{Proofs for Section~\ref{sec:width-theory}: Width}

The width result separates two requirements on the final hidden state:
setting the ratios among allowed tokens, and making those tokens more
likely than all rejected tokens. We first state the full version of
\Cref{thm:width}, including the conditions on a fixed readout. We then
prove the conditions and the matching width bounds. The remaining
subsections explain the readout checks used in the experiments and
how the theorem applies when names span several vocabulary tokens.

\subsection{Capability and proof of Theorem~\ref{thm:width}}
\label{app:width}

\begin{reptheorem}{thm:width}
Let $V$ be a finite vocabulary and $\calS$ a nonempty family of allowed
sets with $\emptyset\ne S\subsetneq V$ for every $S\in\calS$.
Set $m=\max_{S\in\calS}|S|$ and assume $m\ge2$.
A trunk of width $w$ supplies a hidden state $h\in\mathbb R^w$ directly
to a bias-free linear readout $L_R\in\mathbb R^{|V|\times w}$, whose row
for token $v$ is denoted by $L_R[v]$.
Let $p_h=\operatorname{softmax}(L_Rh)$, and let $p_{S,h}$ be $p_h$
restricted to $S$ and renormalized. All matrix and state entries are
finite real numbers.

A fixed $L_R$ is CA-capable on $\calS$ if, for every $S\in\calS$, every
target distribution $\pi$ positive on $S$ and zero outside it, and every
$\epsilon>0$, there exists $h\in\mathbb R^w$ with
$\KL(\pi\|p_{S,h})<\epsilon$. CE capability has the same quantifiers and
requires $\KL(\pi\|p_h)<\epsilon$ instead. The hidden state may depend
on $S$, $\pi$, and $\epsilon$; the readout remains fixed.

\begin{enumerate}
 \item A fixed readout $L_R$ is CA-capable on $\calS$ if and only if,
 for every $S\in\calS$, its rows $\{L_R[v]:v\in S\}$ are affinely
 independent.

 \item A fixed readout $L_R$ is CE-capable on $\calS$ if and only if it
 satisfies the CA condition and, for every $S\in\calS$, there exist a
 vector $g_S\in\mathbb R^w$ and a scalar $\beta_S\in\mathbb R$ such that
 \begin{equation}
  L_R[v]g_S=\beta_S\quad(v\in S),\qquad
  L_R[u]g_S<\beta_S\quad(u\in V\setminus S).
  \label{eq:separator}
 \end{equation}

 \item For any such family $\calS$, a CA-capable readout exists if and
 only if $w\ge m-1$, and a CE-capable readout exists whenever $w\ge2m$.
 If, in addition, there is a set $K\subseteq V$ with $|K|\ge2m+1$ such
 that every $m$-element subset of $K$ belongs to $\calS$, a CE-capable
 readout can exist only if $w\ge2m$. Under this additional assumption,
 the minimum widths admitting the respective capabilities are therefore
 \[
  w^{\mathrm{cap}}_{\mathrm{CA}}=m-1,\qquad
  w^{\mathrm{cap}}_{\CE}=2m.
 \]
\end{enumerate}
The lower bounds constrain the trunk width because it is the input
dimension of $L_R$. The sufficient-width constructions provide readouts
and suitable hidden states; realizing them with a particular trunk
additionally requires that trunk to compute those states.
\end{reptheorem}

For the first two parts, fix an allowed set $S\in\calS$ and write
$O=V\setminus S$. The conditions obtained below must hold for every
$S\in\calS$.

\paragraph{CA criterion: fitting the allowed-token ratios.}
Choose a reference token $v_0\in S$. To fit a positive target $\pi$
after applying the analysis, the hidden state must solve
\[
 (L_R[v]-L_R[v_0])h=\log\frac{\pi(v)}{\pi(v_0)}
 \qquad(v\in S\setminus\{v_0\}).
\]
Every vector in $\mathbb R^{|S|-1}$ occurs on the right for some
positive target. Thus every target can be fitted exactly if and only
if the $|S|-1$ row differences on the left are linearly independent.
Equivalently, the rows indexed by $S$ are affinely independent, or
\[
 \operatorname{rank}[(L_R)_S\mid\mathbf1_S]=|S|.
\]
If this rank condition fails, some target log-ratio vector lies outside
the image of the linear map. The image is closed, so even a sequence
of hidden states cannot approach that vector. Since convergence in
KL to a positive target implies convergence of all its probability
ratios, that target cannot be approximated to arbitrary precision.
This proves the CA criterion and the necessary width $w\ge|S|-1$.

\paragraph{CE criterion: suppressing the rejected tokens.}
Suppose the CA criterion holds, and let $g=g_S$ and $\beta=\beta_S$
satisfy \Cref{eq:separator}. Choose a state $h_0$ fitting any positive target
$\pi$ within $S$. Along $h_\lambda=h_0+\lambda g$, all allowed logits change
by the same amount $\lambda\beta$, so their conditional distribution stays
exactly $\pi$. Every rejected logit falls relative to them as $\lambda\to\infty$.
The support mass tends to one, so total excess CE tends to zero.

\paragraph{Why a successful CE readout must admit such a vector.}
Let $U=\operatorname{col}(L_R)+\mathbb R\mathbf1$, and let $U_S$
be the restriction of this subspace to the allowed coordinates.
If CE excess tends to zero along states $h_n$, both within-mask KL and support
loss tend to zero. Normalize
\[
 z_n=L_R h_n-\log\Big(\sum_{v\in S}e^{L_R[v]h_n}\Big)\mathbf1\in U.
\]
Then $(z_n)_S\to\log\pi$ and every coordinate of $(z_n)_O$ tends to $-\infty$.
The restriction $U\to U_S$ has a linear right inverse $\Gamma$ on its image.
Therefore $v_n=\Gamma((z_n)_S)$ is bounded, and
$d_n=z_n-v_n\in U$ has $(d_n)_S=0$ and $(d_n)_O<0$ for all sufficiently large $n$.
Writing such a $d_n=L_R g-\beta\mathbf1$ gives the required separator.
CE capability also implies CA capability, completing the equivalence.
This argument uses a fixed $L_R$; its right-inverse norm need not be uniform
as the readout changes.

\paragraph{Constructing a CA readout of width $m-1$.}
Place token rows at distinct points on
$t\mapsto(t,t^2,\ldots,t^{m-1})$.
Every set of at most $m$ rows is affinely independent by the Vandermonde
determinant, so width $m-1$ suffices for all allowed sets. A size-$m$ allowed set gives
the matching lower bound.

\paragraph{Constructing a CE readout of width $2m$.}
Use the classical neighborly moment-curve construction~\citep{gale1963neighborly}
in $\mathbb R^{2m}$. 
Every set of at most $m$ rows is
affinely independent. For a particular allowed set $S$ of size $s\le m$, the polynomial
\[
 P_S(t)=-\prod_{v\in S}(t-t_v)^2
\]
has degree $2s\le2m$, vanishes exactly at its token parameters, and is strictly
negative at all other token parameters. Its coefficients give an affine
functional exposing $S$. The CE criterion therefore holds at width $2m$.
Appending zero columns to either construction, with zero entries in the
additional hidden-state coordinates, preserves its predictions and proves
sufficiency at every larger width.

\paragraph{Why smaller CE readouts cannot serve every allowed set.}
Under the additional assumption on $K$, suppose a CE-capable readout exists
with $w<2m$. Each $m$-subset of $K$ is then
the vertex set of an exposed simplex face of the convex hull of all token rows.
Every smaller nonempty subset of $K$ is also an exposed face: extend it to an
$m$-subset and expose the desired subface within that simplex.
To see this directly, add a sufficiently small multiple of a functional exposing
the subface to the functional exposing the full simplex; the strict gaps to
the finitely many other rows persist.

Let the affine dimension of the rows indexed by $K$ be $d\le w$.
As $|K|\ge2m+1\ge d+2$, choose $d+2$ of these rows. An affine dependence,
split into its positive and negative coefficients and normalized, gives disjoint
nonempty sets $A,B$ with intersecting convex hulls. Choose the smaller side as
$A$; then
$|A|\le\lfloor(d+2)/2\rfloor\le m$.
But $A$ is exposed, whereas every row of $B$ lies strictly below its exposing
hyperplane. Their convex hulls cannot intersect, a contradiction.
Thus $w\ge2m$, proving Theorem~\ref{thm:width}.

\paragraph{Relation to the normalized experimental models.}
The theorem allows freely chosen hidden states. Restricting those
states through RMS normalization cannot invalidate its lower bounds.
With unit gain and the positive normalization constant $\nu$ used in the experiments,
\[
 N_\nu(x)=\frac{x}{\sqrt{\|x\|^2/w+\nu}}
 \quad\text{has inverse}\quad
 x(u)=\frac{\sqrt\nu\,u}{\sqrt{1-\|u\|^2/w}}
 \quad\text{for }\|u\|<\sqrt w.
\]
For any finite collection of states $h_c$, choose
$a>\max_c\|h_c\|/\sqrt w$. Replacing $L_R$ by $aL_R$ and supplying
$x(h_c/a)$ before normalization preserves every logit vector:
$(aL_R)N_\nu(x(h_c/a))=L_Rh_c$.
Thus normalization does not obstruct realizing these finite collections
at the stated width when scale may change. It does not make a single
fixed, bounded-scale model capable of arbitrarily small CE excess.

\subsection{Lower bounds for the loss of a learned readout}
\label{app:readout-bounds}

A trained model can have high loss because of its readout, its trunk,
or both. To isolate a restriction imposed by the readout, we fix the
learned $L_R$ and allow the hidden state to be chosen freely for each
target. A positive lower bound in this easier problem also bounds
the loss of the actual trunk. The bound applies to this fixed readout,
not to every readout of the same width.
For a fixed readout and target extended by zero, define
\[
 \rho(L_R,\pi)=\inf_h
 \left[\log\sum_{v\in V}e^{L_R[v]h}
       -\pi^\top L_R h-H(\pi)\right].
\]
Any probability distribution $q$ on $V$ satisfying
$L_R^\top q=L_R^\top\pi$ gives the lower bound
$\rho(L_R,\pi)\ge H(q)-H(\pi)$ by the variational inequality for log-sum-exp.
Requiring $q$ to be supported on $S$ gives the analogous within-mask lower bound.
Feasibility is checked with rational arithmetic.
For a rational target $r$ approximating the intended target $\pi$, an interval
enclosure supplies $\alpha\le\min_{v:r_v>0}\pi_v/r_v$.
A lower bound $L$ at $r$ transfers to
\[
 \rho(L_R,\pi)\ge\max\{0,\ \alpha[L+H(r)]-H(\pi)\}.
\]
Indeed, cross-entropy against $\pi$ is at least $\alpha$ times cross-entropy
against $r$, since $-\log p_v\ge0$. Entropies and the target enclosure are
handled in interval arithmetic. The rank and separator criteria above
establish when the infimum is zero. If no positive lower bound can be
verified, we report only zero as the lower bound.
No optimization or training algorithm can invalidate a
verified bound for the fixed readout, but changing the readout changes the bound.

\subsection{When a variable name spans several vocabulary tokens}
\label{app:tokenization}

The width theorem concerns allowed sets in the model's vocabulary $V$.
The running example and the synthetic scope experiments use one vocabulary
token per variable name. When a name instead consists of several subword
tokens, an analysis over complete names must induce an allowed set at each
intermediate token prefix. For example, if \texttt{x1} and \texttt{x2} are
tokenized as $(\texttt{x},\texttt{1})$ and $(\texttt{x},\texttt{2})$, an
allowed name set $\{\texttt{x1},\texttt{x2}\}$ permits only \texttt{x} at the
first step, then either \texttt{1} or \texttt{2} at the second step.
The CA objective uses these sets of actual next tokens.

To apply \Cref{thm:width}, both $m$ and the condition that every
$m$-element subset of $K$ belongs to $\calS$ must be checked for this
induced family of allowed token sets. A large choice among complete names
can become a sequence of smaller token choices, so neither the largest
allowed-set size nor the condition on $K$ transfers automatically from
lexemes to vocabulary tokens. General implementations must also account
for unfinished lexemes and tokens that cross lexeme boundaries.
The loss decomposition and CA objective still apply to any resulting
computable prefix analysis that retains the desired target tokens.

\section{Proofs for Section~\ref{sec:depth-theory}: Depth}
\label{app:depth}

\Cref{thm:depth} relates the depth needed for small CE excess to the depth
needed to identify the allowed set. We first prove this relation, then
show how CA can leave the decision to the analysis. A separate result
uses a bound on decision accuracy at each input size to obtain an
explicit lower bound on expected CE excess.

\subsection{Analysis accuracy and proof of Theorem~\ref{thm:depth}}
\label{app:depth-definition}

Fix a corpus distribution and a model architecture of width $w$.
Let $\inputsize(c)$ be the size of the input analyzed at prefix $c$;
in the formula example, it counts operators in the current formula.
Each prefix $c$ contains a visible cue $\zeta$ (the tag in the example).
The cue specifies $r\ge2$ nonempty, pairwise disjoint candidate sets
$S^0(\zeta),\ldots,S^{r-1}(\zeta)$ and a target distribution
$\pi^j(\zeta)$ on each set $S^j(\zeta)$.
The analysis selects one of these sets, indexed by $b_A(c)$;
thus $S_A(c)=S^{b_A(c)}(\zeta)$ and $\pi_c=\pi^{b_A(c)}(\zeta)$.

Let $\mathcal F_{L,w}$ be the family of models of depth $L$ and width $w$.
For a model's prediction $p_\theta(\cdot\mid c)$, define its decision by
\[
 \widehat b_\theta(c)
 =\operatorname*{arg\,max}_{0\le j<r}
   p_\theta(S^j(\zeta)\mid c),
\]
with ties broken by the smallest index. For $0\le\alpha\le1$, define
\[
 d_A(\alpha)=\min\left\{L:\exists\theta\in\mathcal F_{L,w},\quad
   \Pr_c[\widehat b_\theta(c)\ne b_A(c)]\le\alpha\right\},
\]
with the minimum of an empty set taken to be $\infty$.
The corpus distribution, candidate sets, width, and
architecture are fixed when comparing depths; $d_A(\alpha)$ suppresses
these fixed choices, as in \Cref{thm:depth}. Error is averaged over
sizes according to their corpus probabilities; it need not
be at most $\alpha$ separately at every size. This definition concerns
the best model at each depth, regardless of how its parameters are obtained.

\begin{proof}[Proof of \Cref{thm:depth}]
For candidate $j$, let $\mu_j=p_\theta(S^j(\zeta)\mid c)$.
The candidate sets are disjoint, so $\sum_j\mu_j\le1$.
On an incorrect decision, some competing set has mass at least
$\mu_{b_A(c)}$. Consequently $\mu_{b_A(c)}\le1/2$, and the support loss
$-\log\mu_{b_A(c)}$ is at least $\log2$.
Support loss is nonnegative on every other prefix. Taking expectations
and using the excess-loss decomposition gives
\begin{equation}
 \E_c\KL\!\left(\pi_c\,\middle\|\,p_\theta(\cdot\mid c)\right)
 \ge\E_c\supportloss(\theta;c)
 \ge(\log2)\Pr_c[\widehat b_\theta(c)\ne b_A(c)].
 \label{eq:depth-error-loss}
\end{equation}
Thus expected CE excess at most $\epsilon$ implies average decision
error at most $\epsilon/\log2$ under the same prefix distribution.
By the definition of
$d_A$, the model's depth must be at least $d_A(\epsilon/\log2)$.
\end{proof}

The same argument applies to a corpus restricted to
$\inputsize(c)\le\maxinputsize$, provided that event has positive
probability. In that case, both the loss expectation and the decision
error defining $d_A$ must use this restricted distribution.
Disjointness is essential: overlapping candidates can share most of
the target mass, so a wrong set decision need not cost $\log2$.
In particular, this reduction does not apply directly to arbitrary
overlapping scope sets.

The depth in $d_A$ counts the transformer layers producing the
probabilities; the fixed rule that sums candidate probabilities and
selects the largest is the same at every depth. This rule uses the
cue-defined candidate sets, not the answer supplied by the analysis.
To apply a recognition lower bound with a different output convention,
one must also account for the computation of these candidate sets and
the sums and comparison. An exact-recognition lower bound alone does
not bound $d_A(\alpha)$ for positive $\alpha$: a model can fail on rare
inputs while attaining high accuracy under the corpus distribution.

\subsection{Leaving the decision to the analysis}

The prediction $p=r^{-1}\sum_j\pi^j(\zeta)$ uses only the visible cue.
After the analysis selects a candidate set, it becomes exactly that
set's target law. It therefore has zero CA excess and support loss
$\log r$, without computing $b_A(c)$.
For a strictly positive softmax prediction on the entire vocabulary,
replace it by $p_\eta=(1-\eta)p+\eta u_V$, where $u_V$ is uniform and
$0<\eta<1$. On the selected set,
\[
 (p_\eta)_{S_A(c)}(v)\ge
 \frac{1-\eta}{1-\eta+r\eta}\,\pi_c(v),
\]
so CA excess tends to zero as $\eta\to0$. Both constructions depend
only on the cue. Realizing them with a shallow transformer requires
that the cue-dependent probabilities be computable at that depth.

\subsection{A loss bound from decision hardness at larger input sizes}
\label{app:depth-hardness}

The next result adds a quantitative assumption about how often models
err at each input size, using the same prefix distribution and
candidate sets as above.
Let $R(L)$ specify a size beyond which a depth-$L$ model's decision
accuracy is bounded as follows.

\begin{theorem}[Depth under decision hardness]
\label{thm:depth-hardness}
Fix $\epsilon\ge0$ and $0\le\delta<1-1/r$.
Suppose that, for every depth $L$, every model
$\theta\in\mathcal F_{L,w}$, and every size $s>R(L)$ with positive
probability under the corpus distribution,
\[
 \Pr\!\left[\widehat b_\theta(c)=b_A(c)\mid \inputsize(c)=s\right]
 \le \frac1r+\delta.
\]
Then every such model has expected CE excess bounded below by
\begin{equation}
 \E_c\KL\!\left(\pi_c\,\middle\|\,p_\theta(\cdot\mid c)\right)
 \ge \E_c\supportloss(\theta;c)
 \ge (1-1/r-\delta)\log2\;\Pr[\inputsize(c)>R(L)].
 \label{eq:depth-bound}
\end{equation}
If a model in $\mathcal F_{L_{\mathrm{in}},w}$ whose predictions depend
only on $\zeta$ attains expected CA excess at most $\epsilon$, then CA
can attain this tolerance at depth $L_{\mathrm{in}}$.
Any CE model attaining the same tolerance must have a depth $L$ for which
the final lower bound in \Cref{eq:depth-bound} is at most $\epsilon$.
\end{theorem}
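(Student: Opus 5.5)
The plan is to reuse the pointwise argument from the proof of \Cref{thm:depth}, but to condition on input size before averaging. The first step restates the excess-loss decomposition from \Cref{app:derivations}. Since $\pi_c$ is supported on $S_A(c)$, we have
\[
\KL(\pi_c\|p_\theta(\cdot\mid c))=\KL(\pi_c\|p_{S_A(c),\theta}(\cdot\mid c))+\supportloss(\theta;c)\ge\supportloss(\theta;c).
\]
This gives the first inequality of \Cref{eq:depth-bound} after taking expectations.

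For the second inequality, I would use the observation from the proof of \Cref{thm:depth}. The candidate sets are disjoint, so on an incorrect decision $\mu_{b_A(c)}\le 1/2$, which gives $\supportloss(\theta;c)\ge\log2\cdot\mathbf 1\{\widehat b_\theta(c)\ne b_A(c)\}$. Support loss is nonnegative everywhere. Hence
\[
\E_c\supportloss\ge\log2\sum_{s}\Pr[\inputsize(c)=s]\,\Pr[\widehat b_\theta(c)\ne b_A(c)\mid\inputsize(c)=s].
\]
I then drop the terms with $s\le R(L)$ and, for each $s>R(L)$ of positive probability, apply the hypothesis. This bounds the conditional error below by $1-1/r-\delta>0$. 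Summing gives $(1-1/r-\delta)\log2\,\Pr[\inputsize(c)>R(L)]$. If sizes are not discrete, the same step works with conditional expectations. The hypothesis holds for every $\theta\in\mathcal F_{L,w}$, so the bound is uniform over models of depth $L$.

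The CA clause is essentially immediate. Suppose a depth-$L_{\mathrm{in}}$ model whose predictions depend only on $\zeta$ attains expected CA excess at most $\epsilon$. The CA excess is exactly what CA training measures, by \Cref{thm:deployed-likelihood} applied to the population version. So CA meets the tolerance at that depth without ever computing $b_A(c)$. The preceding subsection's construction (the mixture $r^{-1}\sum_j\pi^j(\zeta)$, smoothed by $\eta$) shows such cue-only predictors exist in principle, though I would only remark on this and not rely on it. For the CE clause, I argue by contraposition: any CE model in $\mathcal F_{L,w}$ with expected KL at most $\epsilon$ satisfies $\epsilon\ge$ the right side of \Cref{eq:depth-bound} at its own depth $L$.

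I expect the main obstacle to be bookkeeping rather than any mathematical difficulty. The conditioning must be done only over sizes of positive probability, which matches the hypothesis's restriction. Tie-breaking in $\widehat b_\theta$ also needs care: a tie still forces $\mu_{b_A(c)}\le1/2$ whenever the decision is wrong. Finally, the deduction that a tolerance-meeting CE model must satisfy the bound should be phrased for the model's own depth, not as a minimum depth, since $R(L)$ need not be monotone.
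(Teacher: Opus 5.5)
Your proposal is correct and is essentially the paper's proof. The paper combines the pointwise bound $\supportloss\ge\log2$ on wrong decisions (from \Cref{eq:depth-error-loss}) with a size-weighted lower bound $(1-1/r-\delta)\Pr[\inputsize(c)>R(L)]$ on the decision error, treats the CA clause as stipulated, and reads the CE clause off the bound at the model's own depth, exactly as you do; the one caveat is that your aside about non-discrete sizes would need the hypothesis restated for almost every $s>R(L)$, since as written it only constrains sizes of positive probability, but sizes here are operator counts, so this never arises.
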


\begin{proof}
At each size $s>R(L)$, the assumed decision error is at least
$1-1/r-\delta$. Weighting by the corpus probabilities of these sizes gives
\[
 \Pr_c[\widehat b_\theta(c)\ne b_A(c)]
 \ge(1-1/r-\delta)\Pr[\inputsize(c)>R(L)].
\]
Substituting into \Cref{eq:depth-error-loss} proves
\Cref{eq:depth-bound}. A model with CE excess at most $\epsilon$ must
make this lower bound at most $\epsilon$; the stipulated cue-only model
attains that tolerance under CA.
\end{proof}

The decision-hardness assumption concerns every model in the specified
class, on the actual encoded prefix distribution. The trained models'
size-dependent errors in \Cref{sec:depth-theory} do not establish this
assumption or a lower bound on $d_A(\alpha)$ for that distribution.

\section{Proofs for Section~\ref{sec:data}: Data Efficiency}
\label{app:variance}

This appendix makes the training model in \Cref{thm:variance} explicit.
We first describe the two fitting problems, then identify the extra
parameter change required by the support constraints. We follow that
change to an unseen prefix and compare the resulting losses. The last
parts derive the sample requirement and justify the loss decomposition
used to measure corpus variation in \Cref{fig:data-results}.

\subsection{The two fitting problems}
\label{app:variance-fitting}

Let $\corpus$ contain $n$ sampled training prefixes, and write
$\pi_c=\pi_{\text{true}}(\cdot\mid c)$ for the true next-token law.
The fits below use this conditional law at each sampled prefix;
the randomness comes from which prefixes are sampled, rather than
from estimating their target probabilities using next-token counts.

Let $\Delta\in\mathbb R^P$ be a parameter change from a fixed
initialization. At training prefix $c_i$, write $S_i=S_A(c_i)$ and
$O_i=V\setminus S_i$, and choose a reference token $v_i\in S_i$.
The choice is determined by the prefix, independently of a sampled
next token. Let $w_S^0(c_i)$ and $w_O^0(c_i)$ be the initial logits
of the other allowed tokens and the rejected tokens, respectively,
minus the reference logit. In the affine model these become
\[
 w_S(c_i;\Delta)=w_S^0(c_i)+J_S(c_i)\Delta,\qquad
 w_O(c_i;\Delta)=w_O^0(c_i)+J_O(c_i)\Delta,
\]
where the Jacobian matrices $J_S,J_O$ stay fixed during fitting.

Suppose the target law $\pi_{c_i}$ is positive on $S_i$. CA must fit
the $|S_i|-1$ relative logits
$\log[\pi_{c_i}(v)/\pi_{c_i}(v_i)]$ for
$v\in S_i\setminus\{v_i\}$. Subtract the initial values and stack
these equations over training prefixes to obtain
\[
 A_q\Delta=b_q.
\]
There are no equations for rejected tokens in this system.

For the CE fit, also fix a leakage level $\eta\in(0,1)$. If
$k_i=|O_i|>0$, require every rejected relative logit to equal
\[
 \log\frac{\eta}{(1-\eta)k_i\pi_{c_i}(v_i)}.
\]
The allowed exponentials sum to $1/\pi_{c_i}(v_i)$, and the rejected
ones to $\eta/[(1-\eta)\pi_{c_i}(v_i)]$. After normalization, the
rejected mass is therefore exactly $\eta$, while the allowed-token
conditional remains $\pi_{c_i}$. Subtracting the initial rejected
scores and stacking the additional equations gives $A_u\Delta=b_u$.
Prefixes with no rejected tokens add no rows to this system.

The two fitted parameter changes are
\[
 \begin{aligned}
 \Delta_{\mathrm{CA}}&=\arg\min_{A_q\Delta=b_q}\|\Delta\|^2,\\
 \Delta_{\CE}&=\arg\min_{A_q\Delta=b_q,\ A_u\Delta=b_u}\|\Delta\|^2.
 \end{aligned}
\]
We assume the stacked matrix $[A_q^\top\ A_u^\top]^\top$ has full
row rank, so both systems are consistent and their minimum-norm
solutions are unique. There are $n(|V|-1)$ stacked scalar constraints,
so this assumption requires $n(|V|-1)\le P$ and their independence.
The sample-size comparison is restricted to sizes meeting this condition.
This is an idealized interpolation model:
the targets are whole conditional distributions, the CE fit is stopped
at finite leakage, and the Jacobians do not move. A sampled-token
AdamW run need not satisfy these conditions. Expectations in the
theorem are over samples for which these rank and moment assumptions
hold; duplicate or dependent constraints need not meet the rank condition.

\subsection{The extra parameter change needed to fit support}
\label{app:variance-change}
Let $K=\ker A_q$ and $P_K$ be its orthogonal projector; a superscript
$+$ denotes the Moore--Penrose pseudoinverse.
The minimum-norm CA solution is $\Delta_{\mathrm{CA}}=A_q^+b_q$ and lies in
$\operatorname{row}(A_q)=K^\perp$.
Every solution of both systems is therefore uniquely
$\Delta_{\mathrm{CA}}+v$ with $v\in K$ and
$A_uv=b_u-A_u\Delta_{\mathrm{CA}}$.
By orthogonality its squared norm is
$\|\Delta_{\mathrm{CA}}\|^2+\|v\|^2$. Hence the minimum-norm addition is
\[
 \Delta_A=(A_uP_K)^+(b_u-A_u\Delta_{\mathrm{CA}}).
\]
The stacked rank assumption makes $A_uP_K$ full row rank: any row dependence
would put a combination of rows of $A_u$ in $\operatorname{row}(A_q)$.
This proves the decomposition and orthogonality in Theorem~\ref{thm:variance}.
At held-out $c$, the affine model gives the exact difference
$\delta_c=J_S(c)\Delta_A$.

\subsection{Full statement and proof of Theorem~\ref{thm:variance}}
\label{app:variance-loss}
At a held-out prefix, the CA relative logits are
$w_c=w_S^0(c)+J_S(c)\Delta_{\mathrm{CA}}$, and CE changes them by
$\delta_c=J_S(c)\Delta_A$. The expected relative-logit loss is
$f_c(w)=\log(1+\sum_j e^{w_j})-\sum_j\pi_jw_j$,
where the sum omits the reference token.
Write $q_{\mathrm{CA},\corpus}(\cdot\mid c)=\operatorname{softmax}(0,w_c)$
and $q_{\CE,\corpus}(\cdot\mid c)=\operatorname{softmax}(0,w_c+\delta_c)$
for the fitted models' masked predictions. For $X\in\{\mathrm{CA},\CE\}$,
define
\[
 R_X(n)=\E_{\corpus,c}\KL\!\left(
 \pi_{\text{true}}(\cdot\mid c)\,\middle\|\,
 q_{X,\corpus}(\cdot\mid c)\right),
\]
where $\corpus$ has size $n$ and $c$ is drawn independently from the
held-out prefix distribution. This distribution and the law used to
sample training corpora are the same for both fits.

\begin{reptheorem}{thm:variance}
Under the fitting assumptions in \Cref{app:variance-fitting},
$\Delta_{\CE}=\Delta_{\mathrm{CA}}+\Delta_A$, where
$\Delta_A\perp\Delta_{\mathrm{CA}}$ and $A_q\Delta_A=0$.
Thus the additional change preserves every fitted training conditional.
Suppose its effect at held-out prefixes is conditionally centered:
\[
 \E[\delta_c\mid c,\Delta_{\mathrm{CA}}]=0.
\]
If the expectations below are finite, then
\begin{equation}
 \begin{aligned}
 0\le R_{\CE}(n)-R_{\mathrm{CA}}(n)
 &=\E[f_c(w_c+\delta_c)-f_c(w_c)]\\
 &\le\tfrac12\E\|\delta_c\|^2.
 \end{aligned}
 \label{eq:variance-bound}
\end{equation}
In particular, $R_{\CE}(n)>R_{\mathrm{CA}}(n)$ if
$\Pr_{\corpus,c}[\delta_c\ne0]>0$.
Consequently, on any common range of sample sizes satisfying these
assumptions, CA requires no more examples than CE to reach a fixed
expected within-set excess loss.
\end{reptheorem}

The parameter decomposition was proved in \Cref{app:variance-change}.
The equality in \Cref{eq:variance-bound} follows by subtracting the
same target entropy from the two expected token losses.
To compare those losses, recall the function $f_c$ above.
It is convex, with Hessian
$H_c=\operatorname{diag}(q_-)-q_-q_-^\top$, where
$q=\operatorname{softmax}(0,w)$ and $q_-$ omits the reference token.
For any $a$, $a^\top H_ca$ is a variance under $q$ and lies between zero and
$\|a\|^2$, so $0\preceq H_c\preceq I$.
Fix $c$ and $\Delta_{\mathrm{CA}}$, which also fixes $w_c$.
The centering assumption says that the mean of $w_c+\delta_c$ under
this conditioning is $w_c$. Jensen's inequality for the convex
function $f_c$ therefore gives
\[
 \E[f_c(w_c+\delta_c)\mid c,\Delta_{\mathrm{CA}}]\ge f_c(w_c).
\]
Averaging once more proves the lower bound in
\Cref{eq:variance-bound}. For the upper bound, the Hessian bound gives
\[
 f_c(w+\delta_c)\le f_c(w)+\nabla f_c(w)^\top\delta_c
                         +\tfrac12\|\delta_c\|^2
\]
and conditional centering removes the linear term in expectation, giving the
upper bound.

For finite logits, $f_c$ is strictly convex in the nonreference
coordinates: its Hessian is positive definite because every token,
including the reference, has positive probability.
Equality in the expected-loss comparison therefore holds exactly when
$\delta_c=0$ almost surely, meaning the extra CE update leaves the
held-out masked predictions unchanged.

Centering also identifies the added variance directly. When the relative
logits have finite second moments, set $\bar w_c=\E[w_c\mid c]$.
Both fits have this same mean relative-logit vector at $c$, and
\[
 \E[\|w_c+\delta_c-\bar w_c\|^2\mid c]
 =\E[\|w_c-\bar w_c\|^2\mid c]+\E[\|\delta_c\|^2\mid c].
\]
The cross term vanishes because $w_c$ is fixed given
$c,\Delta_{\mathrm{CA}}$ and $\delta_c$ is centered under that conditioning.
This is a variance comparison for relative logits; it does not require
the mean CA prediction to equal the true conditional distribution.

Conditional centering is an additional assumption: preserving the
training conditionals does not prevent a systematic change at an
unseen prefix. The loss ordering above uses this assumption to remove
the linear term; the larger parameter norm alone does not imply it.

\subsection{How many examples are needed to reach a target loss?}
\label{app:sample-requirement}
Let $R_X(n)$ be the expected held-out within-mask excess of learner $X$,
including expectation over its training sample. Consider a common set
$\mathcal I$ of sample sizes at which the two learner families satisfy
Theorem~\ref{thm:variance}, and define
\[
 N_X(\epsilon)=\min\{n\in\mathcal I:R_X(n)\le\epsilon\},
 \qquad \min\varnothing=\infty.
\]
The theorem gives $R_{\mathrm{CA}}(n)\le R_{\CE}(n)$ at each such $n$.
Hence every sample size meeting the CE target also meets the CA target, so
\[
 N_{\mathrm{CA}}(\epsilon)\le N_{\CE}(\epsilon).
\]
If $n_*=N_{\mathrm{CA}}(\epsilon)<\infty$ and
$R_{\CE}(n_*)>\epsilon$, the inequality is strict: CE meets the target neither
at $n_*$ nor at any earlier size, where even CA fails to meet it.
No monotonic learning curve or asymptotic rate is needed for this set-inclusion
argument. A positive risk gap alone does not ensure strictness at every
chosen tolerance, because both risks can already be below it.
The result concerns the specified learner families and valid sample-size
range; it supplies neither a universal sample multiplier nor a rate for AdamW.

\subsection{The corpus-fluctuation loss in Figure~\ref{fig:data-results}}
\label{app:ensemble-fluctuation}

Fix one prefix with target law $\pi$, and let $q_1,\ldots,q_{16}$ be
the masked predictions from the sixteen training corpora. Their mean
is $\bar q=\frac1{16}\sum_iq_i$. Subtracting its loss from the average
individual loss gives
\begin{equation}
 \frac1{16}\sum_{i=1}^{16}\KL(\pi\|q_i)-\KL(\pi\|\bar q)
 =\sum_{v\in S}\pi(v)
   \left[\log\bar q(v)-\frac1{16}\sum_{i=1}^{16}\log q_i(v)\right]
 \ge0.
 \label{eq:ensemble-identity}
\end{equation}
Concavity of the logarithm makes every bracket nonnegative.
Averaging this identity over the shared evaluation prefixes gives
$\fluctuationloss$ in \Cref{eq:prediction-variance}; the average
individual excess is exactly
$\meanpredictionloss+\fluctuationloss$.
Both components are measured in nats. The latter is the average loss
removed by averaging predictions, rather than the variance of the
scalar losses. The identity needs no assumptions about training or
conditional centering.

\section{Experimental Setup and Supporting Results}
\label{app:experiments}

All three studies use known target distributions so that excess loss
can be measured directly. The width study varies model size, the depth
study varies the number of layers, and the data-efficiency study varies
the available training corpus. We give their shared architecture and
optimizer below, followed by the setup and supporting results for each.

\paragraph{Shared architecture and optimization.}
The model is a causal decoder with bias-free attention and MLP projections,
rotary position embeddings (base 10,000), pre-block RMS normalization,
SwiGLU MLPs, and an untied output readout. An optional linear bottleneck projects
the trunk to the readout dimension before final RMS normalization.
The MLP hidden dimension is $8\lceil8d/24\rceil$ for trunk dimension $d$.
Vocabulary size is 134, context length 4,096, and there are four heads.
Weights use normal initialization with standard deviation 0.02; RMS
normalization uses $\epsilon=10^{-5}$.
AdamW uses $(\beta_1,\beta_2)=(0.9,0.95)$, $\epsilon=10^{-8}$, weight decay 0.1,
and gradient clipping at norm 1, with bf16 forward/backward autocast.
Each update processes 16 documents. The nominal $6\times10^8$-position budget
rounds to 9,156 complete updates, or 600,047,616 processed positions.
Only designated answer positions have positive training weight.

The scope width and main data recipes use peak learning rate $3\times10^{-3}$,
2\% warmup (183 updates), and a cosine schedule.
The width runs end at 10\% of peak learning rate; the
16-corpus data study ends at zero.
Parameters and samples are matched within each objective pair.
Changing readout width changes both the bottleneck and output parameter counts;
the capacity theorem refers specifically to the readout dimension.

\input{sections/appendix-width-experiments}
\input{sections/appendix-depth-experiments}
\input{sections/appendix-data-experiments}

%% file: sections/appendix-width-experiments.tex
\subsection{Width: setup and controls for Section~\ref{sec:width-theory}}
\label{app:width-experiments}

\paragraph{Target distribution.}
The scope analysis reads the twelve listed variable names. The
remaining prediction task uses a fixed table $Z$, with one score
for each identifier and function. At a scored prefix, its target is
the softmax of the selected function's column restricted to the
listed scope. We construct this table so that eleven score features
are sufficient for every function.
For identifier $i=0,\ldots,60$, let $t_i=-1+i/30$.
Orthonormalize the columns of the Vandermonde matrix
$(1,t,\ldots,t^{11})$ in order and put $A=\sqrt{61}\,Q_{\cdot,1:11}$.
Let $U\in\mathbb R^{11\times12}$ have the unit vertices of a regular simplex
as columns (a scaled Helmert contrast matrix), and set $Z=0.8AU$.
Then $\operatorname{rank}Z=11$, its columns are centered, and every 12-row
restriction of $[\mathbf1,A]$ is nonsingular. Since the 12 simplex columns span
$\mathbb R^{11}$, the conditional targets at each allowed set span its
relative-logit space. The cue-only state $U_{\cdot f}$ and readout rows
$0.8A_{i\cdot}$ fit the ideal masked law exactly. Equal state norms allow
the final RMS gain to absorb the common normalization.
The float64 law differs by at most $5.4\times10^{-13}$ per logit from
the ideal construction; all reported KL values use that numerical law.
Statements are independent: each draws a uniform 12-subset of names,
a uniform function, and then an answer from the corresponding target
law, with 240 queries per document. The target entropy is approximately 2.2052
nats, and uniform prediction within the allowed set has excess 0.2797.

\paragraph{Training and evaluation.}
The sweep in \Cref{fig:width-results} uses four layers of internal
width 64, with a learned projection to readout widths
$8,10,11,12,16,23,24,32,64$.
The training pool contains 150,000 documents; 9,156 updates consume
146,496 of them. Each document packs 240 training pairs, giving
35,159,040 pairs consumed under the shared recipe above.
Excess losses are evaluated against the known target law on 480,000
queries from 2,000 held-out documents.

\paragraph{What the learned readout can represent.}
To distinguish a readout limitation from an imperfect trunk, we hold
$L_R$ fixed and allow a different hidden state for every target.
The rank and separator conditions in \Cref{app:width} determine
capability; the bounds in \Cref{app:readout-bounds} measure residual
loss when arbitrary precision is impossible for that readout.
The latter use 256 sampled allowed sets and all twelve functions,
giving 3,072 pairs, with a set-bootstrap 5--95\% interval.
This sample is separate from the 480,000-query loss panel.

The CA readout at width 11 can fit all 3,072 checked targets exactly.
At the same width, the CE readout has mean excess-loss lower bound
1.0172 on this panel. Its actual excess on this panel is 1.0786;
the 1.0698 reported on the main validation panel uses different queries.
At width 24, the learned CE readout still has a positive mean lower
bound of approximately 0.153. Width 32 is the first tested CE width
at which every checked target can be fitted to arbitrary precision.
These checks apply to the learned readouts on the sampled targets;
they do not say that every readout of the same width has the same loss.

\paragraph{Fixed-scope control.}
Fixing the same twelve in-scope names throughout the corpus, with
their order randomized, lets both objectives fit accurately at readout
width 12: CE total excess is 0.000229 and CA masked excess is 0.000200.
The learned CE readout satisfies the rank and separator conditions
for this single allowed set. This control changes both the input
distribution and the task of identifying the current scope.

\paragraph{Longer-schedule control.}
We keep trunk and readout width 64, four layers, seed 0, the
150,000-document pool, and the same 480,000-query evaluation panel.
Updates increase from 9,156 to 18,311, with the full cosine schedule
rescaled and warmup increased from 183 to 366 updates. The longer
run consumes all 150,000 documents before repeating most of them,
versus 146,496 distinct documents in the shorter run.
CE's masked excess decreases from 0.027137 to 0.015593 and its total
excess from 0.0537 to 0.0290. CA's masked excess decreases from
0.000520 to 0.000472. Thus CE improves under the longer recipe but
retains higher loss. The comparison changes exposure and the learning
rate trajectory together; it does not establish a fixed optimization
floor. Each objective has one run under each schedule.

%% file: sections/appendix-depth-experiments.tex
\subsection{Depth: setup and controls for Section~\ref{sec:depth-theory}}
\label{app:depth-experiments}

The depth comparison follows one task through the same curriculum for
both objectives. CA learns the tag-dependent probabilities at the
smallest tested depth. Additional CE layers improve formula evaluation,
reducing support loss, while its masked prediction remains less accurate.

\paragraph{Formula distribution and matched curriculum.}
Formula tree shape is proposed uniformly over full binary trees of a
given operator count, and rejection sampling balances the five values
exactly at each size. This accepted distribution is the data law.
Both objectives use width 128, depths 2, 4, and 8, and seeds 0 and 1.
Each model starts from random initialization and first trains on the
uniform short-size law $s\in\{2,3,4\}$, then continues on
$\Pr(s)\propto0.85^{s-2}$ for $2\le s\le24$.
Each stage processes approximately $6\times10^8$ positions in 9,156
updates. The first stage uses peak learning rate $3\times10^{-3}$.
Both stages use 2\% warmup and a cosine schedule ending at 10\% of
the peak rate. Continuation uses peak learning rate $10^{-3}$, a fresh
optimizer state, and a restarted schedule. CA continues from its own
CA-trained first stage. Each objective pair shares both corpora,
initialization and ordering seeds, and both stages' recipes.
Separate hyperparameter selection for CA and CE remains untested on
this task.

\paragraph{Loss and decision accuracy.}
The primary evaluation samples operator counts uniformly from 2 to 24
on 8,000 documents, giving 551,576 queries and at least 23,400 queries
per size. The loss columns in \Cref{tab:depth-pairs} are query means;
the main figure averages the two seeds at each depth.

\input{figures/depth-pairs-table}

CA's masked excess is lower in every pair, by factors of 54--91.
Within each trained-model pair, the nominal 95\% intervals from 10,000
document-clustered bootstrap replicates are disjoint. For example,
depth 2, seed 0 gives $[3.004,3.013]\times10^{-5}$ for CA and
$[1.631,1.647]\times10^{-3}$ for CE. These intervals describe variation
over held-out documents; the two-seed ranges are not confidence
intervals over training runs.

CE's total excess is over 99.6\% support loss in each pair, and its
accuracy at identifying the allowed group improves with depth.
The right panel of \Cref{fig:depth-results} averages query-level CE excess
losses within each size and then averages the two seeds.
For decision accuracy, the largest difference between document-level
and query-weighted estimates over the 96 size/objective/seed cells
at sizes 2--9 is 0.00313. CA's decision accuracy remains
near $1/5$, while the analysis supplies the correct group at deployment.
\input{figures/depth-size-results-figure}

The reported 95\%-accuracy range requires every size up to its endpoint
to have a nominal Bonferroni document-bootstrap lower bound of at least
95\%, using 10,000 replicates and level $0.05/23$ per size.
All-correct sizes use the exact document-binomial bound
$(0.05/23)^{1/n_{\mathrm{docs}}}$ instead. A size needs at least 30
contributing documents; a missing size or a degenerate non-all-correct
bootstrap stops the range. The CE endpoints are $(4,5,8)$ across
depths $(2,4,8)$ at seed 0 and $(3,5,7)$ at seed 1. These are properties
of the trained models, rather than hardness bounds for the model class.

\paragraph{Evaluation under the training size distribution.}
Under the geometric size law used during the second training stage,
the ordering of CE total loss by depth is unchanged. On 2,000
held-out documents and 230,537 queries, the total excesses at depths
2, 4, and 8 are $(0.6672,0.4183,0.1618)$ for seed 0 and
$(0.7213,0.4349,0.2239)$ for seed 1. CA's masked losses remain in the
same small range as under uniform sizes. This checks the effect of
the evaluation size distribution; neither evaluation establishes a
lower bound on the best achievable error at a given depth.

%% file: figures/depth-pairs-table.tex
\begin{table}[!t]
 \centering\small
 \caption{All six matched two-stage pairs, evaluated on the same uniform-size
 panel. Losses are nats per answer; the two within-mask columns use different
 scales. The last column is the largest operator count up to which CE
 meets the 95\%-accuracy criterion at every size. No CA model meets this
 criterion even at the first evaluated size.}
 \label{tab:depth-pairs}
 \input{figures/depth-table}
\end{table}

%% file: figures/depth-table.tex
\begin{tabular}{rrcccc}
\toprule
Depth & Seed & $\innerloss$ ($10^{-5}$) & $\celossmask$ ($10^{-3}$) & $\celoss$ & CE range\\
\midrule
2 & 0 & 3.008 & 1.639 & 1.1036 & 4\\
2 & 1 & 2.382 & 1.536 & 1.1006 & 3\\
4 & 0 & 3.283 & 1.972 & 0.8318 & 5\\
4 & 1 & 2.451 & 2.236 & 0.8552 & 5\\
8 & 0 & 3.178 & 1.706 & 0.4764 & 8\\
8 & 1 & 2.450 & 1.494 & 0.5861 & 7\\
\bottomrule
\end{tabular}

%% file: figures/depth-size-results-figure.tex
\begin{figure}[!t]
 \centering
 \input{figures/depth-size-results}
 \caption{Full-size CE curves for the same two-stage seed-0 models. Left:
 total excess. Right: query-weighted accuracy of identifying the allowed
 set. The main figure averages the two seeds and shows loss by size only
 for operator counts 2--9.}
 \label{fig:depth-full}
\end{figure}

%% file: figures/depth-size-results.tex
\definecolor{CAblue}{HTML}{2376A8}
\definecolor{CEred}{HTML}{B93C48}
\definecolor{CEorange}{HTML}{CF841C}
{\small\textcolor{CAblue}{$\circ$ CE: 2 layers}\quad
\textcolor{CEred}{$\square$ CE: 4 layers}\quad
\textcolor{CEorange}{$\triangle$ CE: 8 layers}}\par
\begin{tikzpicture}
\begin{groupplot}[group style={group size=2 by 1,horizontal sep=1.05cm},
width=0.46\textwidth,height=4.4cm,xlabel={Formula size (operators)},
xtick={2,4,8,12,16,20,24},
tick label style={font=\scriptsize},label style={font=\small},
grid=major,grid style={gray!20},/tikz/mark size=1.5pt,scaled ticks=false,minor tick num=0]
\nextgroupplot[ylabel={Total CE excess (nats)},ymin=0,ymax=1.65]
\addplot[CAblue,mark=o,thick] coordinates {(2,0.00549874431693) (3,0.00849441333398) (4,0.0984166919629) (5,0.427543084512) (6,0.823386189036) (7,1.09798153007) (8,1.22661898291) (9,1.28968736239) (10,1.30688672358) (11,1.3191870446) (12,1.3377610385) (13,1.34795979262) (14,1.35337116742) (15,1.35878119555) (16,1.35866165491) (17,1.36594198864) (18,1.36780041279) (19,1.37719334991) (20,1.37100404482) (21,1.37916063823) (22,1.39301943981) (23,1.39243365833) (24,1.40693728069)};
\addplot[CEred,mark=square*,thick] coordinates {(2,0.00118117321652) (3,0.00172630986416) (4,0.0094603582245) (5,0.102257322981) (6,0.282738949577) (7,0.496694379151) (8,0.692134482614) (9,0.845556831287) (10,0.948849273004) (11,1.01448080635) (12,1.04879056133) (13,1.07394578852) (14,1.09051942152) (15,1.10509543435) (16,1.1126332567) (17,1.13291273847) (18,1.13214580291) (19,1.16107229549) (20,1.15810271534) (21,1.1723487853) (22,1.18568649319) (23,1.18908823497) (24,1.20804257571)};
\addplot[CEorange,mark=triangle*,dashed,thick] coordinates {(2,0.00105486364241) (3,0.00108313537878) (4,0.00134742249101) (5,0.00450636189573) (6,0.0158150176192) (7,0.0480990151306) (8,0.0970458469211) (9,0.170657357865) (10,0.254351793967) (11,0.351890306452) (12,0.444763933616) (13,0.543679075144) (14,0.625318427835) (15,0.694488143096) (16,0.743530538404) (17,0.794226779616) (18,0.820229761288) (19,0.846069579361) (20,0.863191437571) (21,0.892036325288) (22,0.904197229941) (23,0.92569499465) (24,0.951728826233)};
\nextgroupplot[ylabel={Decision accuracy},ymin=0.2,ymax=1.03]
\addplot[CAblue,mark=o,thick] coordinates {(2,0.999629171817) (3,0.99892526455) (4,0.966015542953) (5,0.837081210916) (6,0.65998752858) (7,0.524814722292) (8,0.445259989279) (9,0.400025009379) (10,0.391094021335) (11,0.382596742617) (12,0.379562955255) (13,0.377222569228) (14,0.370342616877) (15,0.372160264354) (16,0.377297114613) (17,0.372835157462) (18,0.370821304548) (19,0.363632573073) (20,0.37251272559) (21,0.368850401606) (22,0.365281948615) (23,0.363772135694) (24,0.351872776732)};
\addplot[CEred,mark=square*,thick] coordinates {(2,1) (3,1) (4,0.998361688721) (5,0.961610137905) (6,0.884140511328) (7,0.787784161879) (8,0.696259948043) (9,0.620440998708) (10,0.569999173075) (11,0.530886824676) (12,0.517585848075) (13,0.498563397876) (14,0.493734648849) (15,0.484427922346) (16,0.483503020574) (17,0.474818635468) (18,0.476800133886) (19,0.461034899721) (20,0.463337680367) (21,0.459044511379) (22,0.457624290958) (23,0.455344570087) (24,0.449173467253)};
\addplot[CEorange,mark=triangle*,dashed,thick] coordinates {(2,1) (3,1) (4,1) (5,0.999047500725) (6,0.995094574932) (7,0.983470730286) (8,0.963630365758) (9,0.933850193823) (10,0.900562308774) (11,0.854667388678) (12,0.81581685744) (13,0.770560066625) (14,0.72782148953) (15,0.692523750516) (16,0.665075408728) (17,0.635341971736) (18,0.618635203548) (19,0.60480340241) (20,0.598376172647) (21,0.583960843373) (22,0.578953953954) (23,0.568209942394) (24,0.554969658924)};
\end{groupplot}
\end{tikzpicture}

%% file: sections/appendix-data-experiments.tex
\subsection{Data efficiency: setup and controls for Section~\ref{sec:data}}
\label{app:data-experiments}

\paragraph{Training corpora and evaluation.}
We use the scope law in \Cref{app:width-experiments}, four layers,
internal width 128, and readout width 64. Each 4,096-token document
packs 240 training pairs. The 2.4M, 7.2M, 24M, and 36M pairs in the
main text therefore correspond to pools of 10k, 30k, 100k, and 150k
documents. At each size, sixteen independently generated corpora
provide the training replications. Their generator seeds are
$2000+10k+j$, with corpus index $k=1,\ldots,16$ and size index
$j=1,\ldots,4$.

Each CA/CE pair shares its corpus, initialization seed 0, batch-order
seed 0, and 9,156-update schedule ending at zero learning rate.
Smaller pools are revisited; the largest contributes 146,496 distinct
documents within the fixed budget. All 128 runs are evaluated at the
final checkpoint on the same 120,000 queries from 500 held-out
validation documents. The training corpora are the independent
replications; the evaluation queries are shared across models.

\paragraph{Loss and corpus variation.}
At each held-out prefix, we normalize each model's prediction over
its allowed set and average these distributions across the sixteen
corpora. We then compute the KL loss of this mean prediction and
subtract it from the mean individual KL loss, exactly as in
\Cref{eq:prediction-variance}. Finally, we average both quantities over
the shared evaluation panel. \Cref{app:ensemble-fluctuation} proves
that the difference is nonnegative. Both quantities are losses in
nats, rather than squared distances between probability vectors.

\input{figures/data-detail-table}

\Cref{tab:data-detail} adds the across-corpus sample standard deviations
and the CE support loss to the main figure. Across the four sizes,
averaging the sixteen predictions removes 89--93\% of CA's excess
and 91--93\% of CE's. The absolute amount removed is 75--146 times
larger for CE. CA at 2.4M pairs has lower masked excess and
$\fluctuationloss$ than CE at 36M pairs under this fixed recipe.
These comparisons measure corpus-dependent prediction error; they
do not establish the minimum-change or conditional-centering
assumptions of \Cref{thm:variance} for AdamW training.

\paragraph{Independent hyperparameter selection.}
At each of 2.4M and 36M training pairs, each objective searches learning
rates $\{0.001,0.003,0.01\}$ and readout weight decays $\{0,0.1\}$,
using two independent tuning corpora per recipe. Architecture,
initialization and order seed, and the zero-endpoint schedule match
the primary study. Each objective selects by mean masked loss on the
tuning corpora's validation splits. The choices are fixed before
training on four new evaluation corpora per size; the common
120,000-query reporting panel is held out from selection.

\input{figures/tuning-results-table}

CA has lower masked excess in all eight evaluation pairs
(\Cref{tab:independent-tuning}). The ratios of mean CE to mean CA
excess are approximately 193 at 2.4M pairs and 156 at 36M.
Selecting CE by unmasked validation loss instead gives rate 0.001
and no readout decay at 2.4M, with masked excess 0.027084 versus
0.029823 under masked selection. The paired difference has unresolved
sign in the four-corpus resampling analysis. At 36M both criteria
select the same CE recipe. CA remains lower in every pair under
either selection criterion. These comparisons assess sensitivity
within the stated grid, rather than optimal hyperparameters.

\paragraph{Longer training with fixed and larger data pools.}
Three cells, each with sixteen paired corpus runs, distinguish
repeating a fixed pool from accessing a larger one under the same
longer schedule:
\begin{center}
 \small
 \begin{tabular}{lrrr}
 \toprule
 Cell & Available pairs (M) & Updates & Distinct pairs consumed (M)\\
 \midrule
 A & 36 & 9,156 & 35.15904\\
 B & 36 & 18,312 & 36\\
 C & 72 & 18,312 & 70.31808\\
 \bottomrule
 \end{tabular}
\end{center}
Cell A reuses the primary runs; B and C add 64 runs with the same
architecture, initialization and order seeds, and evaluation panel.
A and B use identical available corpora. B rescales the entire cosine
schedule, including warmup from 183 to 366 updates. Each C corpus
extends its A generator stream, but its larger pool changes the
loader's permutation. Thus C is not a continuation of A's realized
training stream. The missing 72M-pair short-schedule cell prevents
separating all effects of corpus size and schedule. The longer
schedule here has 18,312 updates, versus 18,311 in the width control.

\input{figures/schedule-loss-table}

CA has smaller mean masked excess and $\fluctuationloss$ in all three
cells (\Cref{tab:budget-loss}). Its fluctuation loss decreases from
$6.64\times10^{-5}$ to $4.14\times10^{-5}$ nats with the longer
schedule at the same available pool size, showing that the residual
in the primary sweep is sensitive to the training recipe.
CE's loss varies substantially across corpora: its standard deviation
is 1.10 times its mean in B and 1.13 times its mean in C, versus
0.37 in A. Resampling the sixteen corpus indices jointly across cells
and objectives, using 2,000 replicates, gives 2.5--97.5 percentile
bands spanning zero for the CE changes in log fluctuation loss.
These bands describe sensitivity to the observed corpora. These controls support the observed
CA advantage while leaving the source of the residual errors open.

%% file: figures/data-detail-table.tex
\begin{table}[!t]
 \centering\small
 \setlength{\tabcolsep}{3pt}
 \caption{Primary data study over sixteen independent training corpora
 at each pool size, with fixed initialization and order seed. Masked
 excess is mean $\pm$ sample standard deviation; other entries are means.
 CE masked excess plus support loss equals CE total before rounding.
 Pool sizes are in millions of training pairs; losses are held-out nats per query.}
 \label{tab:data-detail}
 \begin{tabular}{rcccc}
 \toprule
 Pairs (M) & $\innerloss$ & $\celossmask$ & CE $\supportloss$ & $\celoss$\\
 \midrule
 2.4 & $0.000379\pm0.000080$ & $0.028276\pm0.002998$ & 0.017490 & 0.045766\\
 7.2 & $0.000080\pm0.000005$ & $0.011467\pm0.005526$ & 0.017969 & 0.029436\\
 24 & $0.000075\pm0.000004$ & $0.007973\pm0.003928$ & 0.015902 & 0.023875\\
 36 & $0.000074\pm0.000003$ & $0.007268\pm0.002713$ & 0.015219 & 0.022487\\
 \bottomrule
 \end{tabular}
\end{table}

%% file: figures/tuning-results-table.tex
\begin{table}[!t]
\centering\small
\caption{Independent selection for masked deployment. Each objective uses
the same six-recipe search and two tuning corpora per size. Losses are
means over four new evaluation corpora; the masked-loss column also gives
the sample standard deviation. Pool sizes are in millions of training
pairs; all losses are excess in nats per query.
The CE total column describes the same masked-loss-selected CE model.}
\label{tab:independent-tuning}
\begin{tabular}{llrrcc}
\toprule
Pairs (M) & Objective & Peak LR & Readout decay & Masked excess & CE total\\
\midrule
2.4 & CA & 0.010 & 0 & $0.000155\pm0.000013$ & --\\
2.4 & CE & 0.003 & 0.1 & $0.029823\pm0.005366$ & 0.049277\\
36 & CA & 0.001 & 0 & $0.000060\pm0.000004$ & --\\
36 & CE & 0.003 & 0.1 & $0.009372\pm0.002489$ & 0.027421\\
\bottomrule
\end{tabular}
\end{table}

%% file: figures/schedule-loss-table.tex
\begin{table}[!t]
 \centering\small
 \caption{Larger-pool and longer-schedule study. Within-mask excess is
 mean $\pm$ sample standard deviation over 16 corpora.
 $\fluctuationloss$ is the average loss removed by averaging their masked
 predictions. Both numerical columns use units of $10^{-3}$ nats.}
 \label{tab:budget-loss}
 \input{figures/budget-table}
\end{table}

%% file: figures/budget-table.tex
\begin{tabular}{llcc}
\toprule
Cell & Objective & Masked excess ($10^{-3}$ nats) & $\fluctuationloss$ ($10^{-3}$ nats)\\
\midrule
A & CA & $0.074\pm0.003$ & 0.0664\\
A & CE & $7.268\pm2.713$ & 6.5800\\
B & CA & $0.047\pm0.002$ & 0.0414\\
B & CE & $4.899\pm5.365$ & 4.4032\\
C & CA & $0.046\pm0.002$ & 0.0410\\
C & CE & $5.947\pm6.708$ & 5.3315\\
\bottomrule
\end{tabular}